\documentclass[twoside]{article}

%
\usepackage[accepted]{aistats2026}
%

%
%


\usepackage[round]{natbib}

\bibliographystyle{apalike}

\usepackage[utf8]{inputenc} 
\usepackage[T1]{fontenc}    
\usepackage{hyperref}       
\usepackage{url}            
\usepackage{booktabs}       
\usepackage{amsthm}         
\usepackage{amsfonts}       
\usepackage{nicefrac}       
\usepackage{microtype}      
\usepackage{xcolor}         
\usepackage{multirow}
\usepackage{subcaption}

\usepackage[textsize=tiny]{todonotes}

\usepackage{algorithm}
\usepackage{algpseudocode}
\usepackage{tikz-cd}
\usepackage{amssymb}

\usepackage{amsmath,amsfonts,bm}









\def\eqref#1{equation~\ref{#1}}









\newtheorem{assumption}{Assumption}

\newtheorem{theorem}{Theorem}
\newtheorem{lemma}{Lemma}
\newtheorem{prop}{Proposition}

\newtheorem{cor}{Corollary}

\def\1{\bm{1}}










\DeclareMathAlphabet{\mathsfit}{\encodingdefault}{\sfdefault}{m}{sl}
\SetMathAlphabet{\mathsfit}{bold}{\encodingdefault}{\sfdefault}{bx}{n}


\def\gB{{\mathcal{B}}}

\def\gN{{\mathcal{N}}}
\def\gO{{\mathcal{O}}}

\def\gQ{{\mathcal{Q}}}

\def\gZ{{\mathcal{Z}}}










\newcommand{\E}{\mathbb{E}}

\newcommand{\R}{\mathbb{R}}



\DeclareMathOperator*{\argmax}{arg\,max}


\newcommand{\X}{\mathcal{X}} 







\usepackage{bbm}

\def\Ac{\mathcal{A}}
\def\Cc{\mathcal{C}}

\def\Fc{\mathcal{F}}
\def\Rc{\mathcal{R}}
\def\Sc{\mathcal{S}}
\def\Zc{\mathcal{Z}}
\def\Hc{\mathcal{H}}

\def\E{\mathbb{E}}
\def\Rr{\mathbb{R}}

\begin{document}

%

%
\twocolumn[

\aistatstitle{Reinforcement Learning Using Known Invariances}

\aistatsauthor{
Alexandru Cioba$^{*1}$ \And
Aya Kayal$^{*2}$ \And
Laura Toni$^{3}$ \AND
Sattar Vakili$^{1}$ \And
Alberto Bernacchia$^{1}$
}

\begin{center}
\textsuperscript{*}Equal contribution
\end{center}
\aistatsaddress{$^{1}$MediaTek Research \And  $^{2}$American University of Beirut \And  $^{3}$University College London } ]

\begin{abstract}
In many real-world reinforcement learning (RL) problems, the environment exhibits inherent symmetries that can be exploited to improve learning efficiency. This paper develops a theoretical and algorithmic framework for incorporating known group symmetries into kernel-based RL. We propose a symmetry-aware variant of optimistic least-squares value iteration (LSVI), which leverages invariant kernels to encode invariance in both rewards and transition dynamics. Our analysis establishes new bounds on the maximum information gain and covering numbers for invariant RKHSs, explicitly quantifying the sample efficiency gains from symmetry. Empirical results on a customized Frozen Lake environment and a 2D placement design problem confirm the theoretical improvements, demonstrating that symmetry-aware RL achieves significantly better performance than their standard kernel counterparts. These findings highlight the value of structural priors in designing more sample-efficient reinforcement learning algorithms.
\end{abstract}

\section{INTRODUCTION}

Reinforcement Learning (RL) has achieved remarkable empirical success in diverse domains, including robotic manipulation, autonomous driving, game playing, and chip design~\citep{kalashnikov2018scalable,silver2016mastering,kahn2017uncertainty,mirhoseini2021graph,fawzi2022discovering}. However, despite these advances, the theoretical foundations of RL in complex environments—particularly those with continuous state-action spaces and nonlinear function approximation—remain less well understood. While recent work has established regret guarantees for tabular~\citep{jin2018q,auer2008near} and linear settings~\citep{jin2020provably,russo2019worst}, there is an increasing need to analyze RL algorithms that employ expressive function classes under realistic structural assumptions.

A key structure prevalent in real-world RL problems is symmetry. Many environments exhibit invariance under transformations like rotations, translations, or permutations. For instance:
Robotic arms often operate in rotationally symmetric workspaces;
Autonomous driving scenarios may be invariant to reflections or road mirroring;
Strategic games frequently contain symmetries due to interchangeable board configurations.
By explicitly incorporating these invariances into RL algorithms, we can enhance sample efficiency and generalization by eliminating redundant learning.

How can known environment symmetries be systematically exploited to enhance sample efficiency and regret guarantees under nonlinear function approximation?
To address this, we focus on kernel methods, which offer a theoretically tractable framework for modeling nonlinearity. Leveraging recent advances in invariant kernels for supervised and bandit learning~\citep{brown2024sample,haasdonk2007invariant,mei2021learning}, we introduce a rigorous theoretical and algorithmic framework for symmetry-aware RL via invariant Reproducing Kernel Hilbert Spaces (RKHSs). \textcolor{black}{In this work, we assume that the relevant invariances in the reward and transition dynamics are known a priori, as is common in many structured domains (e.g., robotics, board games, or resource allocation). Our approach is complementary to methods that learn invariances from data, such as meta-learning symmetry structures~\citep{zhou2020meta} or learning invariances via marginal-likelihood optimization in Gaussian processes~\citep{van2018learning}, and can naturally incorporate such learned structures when available.}

\paragraph{Contributions:} 
\begin{enumerate}
    \item We introduce a symmetry-aware variant of the kernel-based optimistic least-squares value iteration (LSVI) algorithm~\citep{yang2020provably}, incorporating group invariances in both reward and transition dynamics via a totally invariant kernel. 
    \item We provide a theoretical analysis of LSVI with totally invariant kernels, and give new bounds on the covering number of the function class of target $Q$-functions, which we translate into novel, symmetry-aware bounds on sample complexity. To our knowledge, this is the first theoretical analysis of the gains in sample complexity achieved in RL due to incorporating symmetries into the algorithmic design.  
    \item Finally, we validate our theoretical findings through a series of experiments where natural geometric symmetry occurs in the MDP, demonstrating that incorporating prior knowledge of invariance into the kernel yields substantial gains in sample complexity.
\end{enumerate}

\section{RELATED WORK}
\label{sec:related_works}
\paragraph{Regret Bounds/Sample Complexities in RL}
Numerous studies have explored the sample complexity problem within the episodic MDP framework.  Both the tabular setting~\citep{jin2018q,auer2008near,bartlett2012regal} and the linear setting~\citep{jin2020provably,yao2014pseudo,russo2019worst,zanette2020frequentist,neu2020unifying} have been extensively investigated. In the tabular case,
optimistic state-action value learning algorithms~\citep{jin2018q} achieve a regret bound of $\gO\left(\sqrt{H^3 |\Sc \times \Ac| T}\right)$, where $H$ denotes the episode length, $T$ is the number of episodes, and $\Sc$ and $\Ac$ represent the finite state and action spaces, respectively. In the linear setting, regret bounds scale as $\gO\left(\sqrt{H^3 d^3 T}\right)$~\citep{jin2020provably}, depending on the dimension $d$ of the linear model rather than the size of the state-action space. Moreover, efforts have been made to extend these techniques to the kernelized setting~\citep{yang2020provably,yang2020reinforcement,chowdhury2019online,domingues2021kernel,vakili2023kernelized}, though further refinements are required to achieve tighter regret bounds. The most notable advancement in this direction is~\citet{yang2020provably}, which establishes regret guarantees for an optimistic least-squares value iteration (LSVI) algorithm. 
The regret bounds reported in~\citet{yang2020provably} are given by $ \gO\left(H^2\sqrt{(\Gamma(T)+\log \gN(\epsilon))\Gamma(T)T}\right)$, where $\Gamma(T)$ and $\gN(\epsilon)$ are kernel-related complexity measures—specifically, the maximum information gain and the $\epsilon$-covering number of the state-action value function class. In this work, we adopt the same kernel-based framework as~\citet{yang2020provably}, but for the first time, we incorporate group symmetries by introducing a totally invariant kernel. This allows us to exploit problem structure directly in the learning algorithm. We analyze the corresponding LSVI algorithm and derive new theoretical results by bounding $\Gamma(T)$ and $\gN(\epsilon)$ for invariant RKHSs, thereby establishing the first sample complexity guarantees for kernel-based RL methods that explicitly account for symmetry.


\paragraph{Invariant Kernel Methods}
Invariant kernel methods incorporate prior knowledge of invariances into kernel-based learning algorithms. The theoretical framework for group-invariant kernels was first introduced by~\cite{haasdonk2007invariant,kondor2008group}, and subsequent work has leveraged these ideas in Gaussian Processes (GPs), kernel ridge regression, and bandit optimization. For example,\cite{van2018learning} integrates invariances directly into the GP model structure rather than relying on data augmentation and develops a variational inference scheme to learn these invariances by maximizing the marginal likelihood. Other works, such as~\cite{mei2021learning} and~\cite{bietti2024sample}, explore kernelized regression with translation- and permutation-invariant kernels, respectively, analyzing the benefits of invariance in terms of improved sample complexity. A related study by~\cite{tahmasebi2023exact} provides an exact characterization of sample complexity gains when the target function is invariant under the action of an arbitrary Lie group, linking these gains to the dimensionality of the underlying group structure. In the context of kernelized bandit optimization,~\cite{brown2024sample} derive upper and lower bounds on sample complexity for Bayesian Optimization (BO) with invariant kernels when optimizing invariant objective functions. In this work, we extend these sample complexity results to the RL setting under symmetry assumptions on both the reward and transition dynamics.

\paragraph{RL with Symmetry}

\textcolor{black}{Classical work on symmetry in RL is closely connected to model minimization and MDP homomorphisms~\citep{ravindran2001symmetries,ravindran2002model,ravindran2004algebraic}, which formalize structure-preserving mappings between MDPs under which solving an abstract MDP yields an optimal policy for the original. More generally, state abstraction—mappings that reduce the state space while preserving selected reward or transition structure~\citep{ravindran2003smdp, li2006towards}—and bisimulation, which defines equivalence classes of states that are behaviorally indistinguishable~\citep{givan2003equivalence,taylor2008lax}, provide principled ways of exploiting structural equivalences in MDPs.}

Building on these ideas, symmetries in Markov Decision Processes (MDPs) have also been studied from a modern learning perspective. Several works apply data augmentation techniques in deep RL to improve sample efficiency and generalization on well-known benchmarks~\citep{yarats2021image, laskin2020reinforcement, lee2019network, cobbe2019quantifying, weissenbacher2022koopman, sinha2022s4rl}. Beyond data augmentation, a number of studies incorporate symmetry into neural network architectures to reduce sample complexity~\citep{van2020mdp, simm2020symmetry, wang2022mathrm, mondal2022eqr, nguyen2023equivariant, zhu2022sample}, including \citep{van2020mdp}, which designs equivariant networks that respect symmetry-induced homomorphisms and reduce the solution space.

These ideas have also been extended to the multi-agent RL setting, where symmetries can help reduce policy space complexity, enable the discovery of equivalent strategies, and facilitate coordination between agents. Such approaches often leverage graph-based methods or symmetry-aware architectures~\citep{van2021multi, liu2020pic, jiang2018graph, sukhbaatar2016learning, muglich2022equivariant, muglich2025expected, yu2024leveraging, tian2024exploiting, mcclellan2024boosting, yu2023esp, shi2025symmetry, chen2024rm}.

\textcolor{black}{In contrast to both classical abstraction methods and modern parametric approaches, our work focuses on single-agent RL and provides a non-parametric, kernel-based framework for incorporating known symmetries. Rather than learning abstractions or designing equivariant neural networks, we encode invariance directly into the function class via group-invariant kernels. Conceptually, the induced equivalence relation resembles the state aggregation underlying abstractions and homomorphisms, but is implemented implicitly through the kernel rather than an explicit mapping. This enables new theoretical insights: we analyze how invariance reduces the effective hypothesis space through tighter bounds on the maximum information gain and covering numbers of the invariant RKHS, thereby quantifying sample-efficiency gains. Finally, while modern deep RL symmetry methods such as data augmentation or equivariant architectures enforce invariances heuristically or parametrically, our non-parametric, kernel-based formulation provides formal guarantees of improved sample complexity.}




\section{PRELIMINARIES AND PROBLEM FORMULATION}
\label{sec:prelim}

In this section we introduce the problem formulation that we require for our theoretical results. Similar background can be found in~\citet{yang2020provably,vakili2023kernelized}. 

\subsection{Episodic Markov Decision Processes}\label{sec:episodeMDP}

An episodic MDP is defined by the tuple $M = (\Sc, \Ac, H, P, r)$, where $\Sc$ and $\Ac$ are the state and action spaces, $H$ is the episode length, $r = \{r_h\}_{h=1}^H$ is the sequence of deterministic reward functions $r_h: \Zc \to [0,1]$, and $P = \{P_h\}_{h=1}^H$ is the sequence of transition distributions $P_h(\cdot \mid s,a)$ over next states, given $(s,a) \in \Zc = \Sc \times \Ac$. A policy $\pi = \{\pi_h\}_{h=1}^H$ maps states to actions at each step $h$, with $\pi_h: \Sc \to \Ac$. At the start of episode $t$, the environment provides $s_1^t$, and the agent selects policy $\pi^t$. At step $h$, the agent observes $s_h^t$, takes $a_h^t = \pi_h^t(s_h^t)$, receives $r_h(s_h^t, a_h^t)$, and transitions to $s_{h+1}^t \sim P_h(\cdot \mid s_h^t, a_h^t)$. The episode ends after $H$ steps.

The goal is to find a policy that maximizes the expected return from any state $s$ and step $h$, defined by the value function:
\begin{equation}
\begin{aligned}
V^{\pi}_h(s) &= \E\left[\sum_{h'=h}^H r_{h'}(s_{h'},a_{h'})\bigg|s_{h}=s\right], \\
&\quad \forall s\in\Sc,\; h\in[H].
\end{aligned}
\end{equation}
%
where the expectation is over trajectories induced by $\pi$. Under standard assumptions, an optimal policy $\pi^\star$ exists such that $V_h^{\pi^\star}(s) = \max_\pi V_h^\pi(s)$ for all $s$ and $h$~\citep{puterman2014markov}. We write $V_h^\star(s) := V_h^{\pi^\star}(s)$, and define the expected value under $P_h$ as
\begin{equation}
[P_hV] := \E_{s'\sim P_h(\cdot|s,a)}[V_h(s')].
\end{equation}

The state-action value function $Q_h^\pi(s,a)$ is the expected return from $(s,a)$ at step $h$, defined as $Q_h^\pi(s,a) = \E_\pi\left[\sum_{h'=h}^H r_{h'}(s_{h'},a_{h'}) \mid s_h = s, a_h = a \right]$. The Bellman equation for $\pi$ is $Q_h^\pi(s,a) = r_h(s,a) + P_h V_{h+1}^\pi$, with $V_h^\pi(s) = \E_{a \sim \pi_h(s)}[Q_h^\pi(s,a)]$ and $V_{H+1}^\pi := 0$.
The Bellman optimality equations are $Q_h^\star(s,a) = r_h(s,a) + P_h V_{h+1}^\star$, and $V_h^\star(s) = \max_a Q_h^\star(s,a)$, with $V_{H+1}^\star := 0$.
The regret of a policy sequence $\{\pi^t\}_{t=1}^T$ is defined by:
\begin{equation}
\Rc(T) = \sum_{t=1}^T\left(V_1^\star(s_1^t) - V_1^{\pi^t}(s_1^t)\right),
\end{equation}
where $s_1^t$ is the initial state at episode $t$, and measures the cumulative value loss relative to the optimal policy.





\subsection{Symmetric RL}\label{sec:symmetric_RL}

In many real-world RL problems, the environment exhibits symmetries: structured transformations of states and actions that leave the reward and dynamics unchanged. In this section, we formalize this notion and provide intuitions for how symmetry can be leveraged to reduce sample complexity. We defer detailed derivations to Appendix~\ref{appx:grp_action}.

\paragraph{Group Actions on MDPs.} Let $(G, \cdot)$ be a group acting on a set $\X$.
We denote the action map as $\circ: G \times \X \to \X$ and the induced action by $G \curvearrowright \X$. In the context of an MDP $\mathcal{M} = (\Sc, \Ac,H, P, r)$, we say $G$ acts on $\mathcal{M}$ if there are actions $G \curvearrowright \Sc$ and $G \curvearrowright \Ac$ such that, for all $s,s' \in \mathcal S$ and $a\in \mathcal A$:
\begin{align}
        \mathbb P(s'|g \circ_{\mathcal S} s, g \circ_{\mathcal A} a) &= \mathbb P(g^{-1} \circ_{\mathcal S} s'|s,a)\\
        r(g\circ_{\mathcal S} s, g \circ_{\mathcal A}a) &= r(s, a)
    \end{align}
The $\circ_{\mathcal S}$ and $\circ_{\mathcal A}$ actions allow us to define a coupling between the state and actions spaces. Functions that preserve this coupling are called \textit{equivariant}. In particular this applies to policies.
\paragraph{Equivariant Policies.}  A policy $\pi : \mathcal S \to \mathcal P(\Ac)$ is called equivariant
with respect to $G$ if, for all $g$ and $s$, $\pi(g \circ_\Sc s) = g^{-1} \circ_\Ac \pi(s)$.
We are particularly interested in cases where such symmetries are known and can be used to constrain the hypothesis space of the learning algorithm. The following proposition which follows from first principles (see Appendix \ref{appendix:symrl} for derivation), justifies restricting solutions to the space of invariant, respectively equivariant functions for all policy-value iteration (PVI) algorithms. 
\begin{prop}\label{prop:equivinv}
In finite-horizon MDPs, equivariant policies have invariant value functions $V_\pi(s)$ and $Q_\pi(s,a)$. Moreover, if $Q(s,a)$ is invariant under $G$ and satisfies the Bellman equation, then the greedy policy $\pi(s) = \arg\max_a Q(s,a)$ is equivariant.
\end{prop}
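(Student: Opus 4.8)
The plan is to prove the two assertions separately: the first by downward (backward) induction on the horizon index $h$, and the second by a direct change-of-variables argument in the $\argmax$. Throughout I take \emph{invariance} of value functions to mean $V^\pi_h(g \circ_{\mathcal{S}} s) = V^\pi_h(s)$ and $Q^\pi_h(g \circ_{\mathcal{S}} s, g \circ_{\mathcal{A}} a) = Q^\pi_h(s,a)$ for all $g \in G$, and \emph{equivariance} of the policy as the stated relation between $\pi_h(g \circ_{\mathcal{S}} s)$ and the $G$-action on $\pi_h(s)$. The engine of both parts is the dynamics symmetry, which I would use in its pushforward form $\mathbb{P}(g \circ_{\mathcal{S}} s' \mid g \circ_{\mathcal{S}} s, g \circ_{\mathcal{A}} a) = \mathbb{P}(s' \mid s, a)$, equivalent to the stated identity after relabelling $s'$.

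For the first claim I would induct downward on $h$. The base case is immediate since $V^\pi_{H+1} \equiv 0$ is invariant. For the inductive step, assume $V^\pi_{h+1}$ is invariant. I first show $Q^\pi_h$ is invariant: by the Bellman equation $Q^\pi_h = r_h + P_h V^\pi_{h+1}$, the reward term is invariant by the reward-symmetry hypothesis, and for the transition term I rewrite $[P_h V^\pi_{h+1}](g \circ_{\mathcal{S}} s, g \circ_{\mathcal{A}} a) = \E_{s' \sim P_h(\cdot \mid g \circ_{\mathcal{S}} s,\, g \circ_{\mathcal{A}} a)}[V^\pi_{h+1}(s')]$, substitute the dynamics symmetry to replace the transition law by the pushforward of $P_h(\cdot \mid s,a)$ under $s' \mapsto g \circ_{\mathcal{S}} s'$, and then apply the inductive hypothesis $V^\pi_{h+1}(g \circ_{\mathcal{S}} s') = V^\pi_{h+1}(s')$ to recover $[P_h V^\pi_{h+1}](s,a)$. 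Having established invariance of $Q^\pi_h$, I pass to $V^\pi_h(g \circ_{\mathcal{S}} s) = \E_{a \sim \pi_h(g \circ_{\mathcal{S}} s)}[Q^\pi_h(g \circ_{\mathcal{S}} s, a)]$ and use policy equivariance to write an action drawn from $\pi_h(g \circ_{\mathcal{S}} s)$ as the $G$-transform of an action drawn from $\pi_h(s)$; the joint invariance of $Q^\pi_h$ just proved then cancels the group element and yields $V^\pi_h(g \circ_{\mathcal{S}} s) = V^\pi_h(s)$, closing the induction.

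For the second claim, assume $Q$ is invariant. I would compute $\pi(g \circ_{\mathcal{S}} s) = \argmax_{a \in \mathcal{A}} Q(g \circ_{\mathcal{S}} s, a)$ and perform the change of variable $a = g \circ_{\mathcal{A}} a'$; since $a' \mapsto g \circ_{\mathcal{A}} a'$ is a bijection of $\mathcal{A}$, maximising over $a$ is the same as maximising over $a'$, and by invariance $Q(g \circ_{\mathcal{S}} s, g \circ_{\mathcal{A}} a') = Q(s, a')$. Hence the maximiser expressed in the new variable is exactly $\argmax_{a'} Q(s,a') = \pi(s)$, and undoing the substitution exhibits $\pi(g \circ_{\mathcal{S}} s)$ as the group-transform of $\pi(s)$ prescribed by the action on $\mathcal{P}(\mathcal{A})$, which is precisely the equivariance relation. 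This direction uses only invariance of $Q$; the Bellman hypothesis serves to guarantee that such a $Q$ is realised as a value function of the induced MDP, so that the greedy policy is well defined.

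The main obstacle I anticipate is the rigorous justification of the change of variables in the transition term when $\mathcal{S}$ is continuous: the density-level identity in the statement implicitly requires the $G$-action to act by measure-preserving bijections, and the clean route is to phrase the step at the level of measures (pushforwards), so that no Jacobian factor appears and the identity reduces to a relabelling of the sum in the discrete case. A secondary technical point is the treatment of ties in the greedy $\argmax$: when the maximiser is not unique the computation delivers equivariance of the set-valued $\argmax$, and to obtain a genuine equivariant policy one must fix a $G$-consistent tie-breaking rule, which the same change-of-variables argument shows is always available.
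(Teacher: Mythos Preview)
Your proposal is correct and follows essentially the same route as the paper: backward induction on $h$ for invariance of $V_\pi$ and $Q_\pi$ (using dynamics and reward symmetry for $Q^\pi_h$ and policy equivariance for $V^\pi_h$), and a change of variables $a \mapsto g\circ_{\mathcal A} a'$ in the $\argmax$ for the greedy policy. Your treatment is in fact more careful than the paper's on the measure-theoretic change of variables and on tie-breaking in the $\argmax$, both of which the paper leaves implicit.
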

This result implies that if we start PVI in the space of invariant functions, and we incur no approximation error in either policy evaluation or iteration, we are guaranteed to recover an optimum, and such optimum is equivariant. 
However, in practical settings these guarantees don't apply. The following two results, which we present intuitively here, are meant to bridge the gap to the realistic setting where function approximation is of importance. 
\paragraph{Effective Reduction in the State Space}The first simply states that in some cases symmetries can be factored cleanly out of the MDP, resulting in a smaller, simpler space where all algorithms benefit from reduced sample complexity guarantees. 
If the action on $\Ac$ is trivial ($g \circ a = a$), we can define a reduced MDP over state orbits $\Sc/G$. This induces a simpler equivalent problem whose optimal policy can be lifted back to the original space.
See Proposition \ref{prop:quotient_MDP} in Appendix~\ref{appx:prop:quotient_MDP}.

\paragraph{Simple Sample Complexity Bounds for $Q$-learning} The second is a variation on a celebrated result from \cite{jin2018q}, where we explore how sample complexity varies in $Q$-learning in the tabular setting under data augmentation. In tabular Q-learning, regret bounds typically scale with $|\Sc \times \Ac| = SA$. In the presence of symmetry, the relevant complexity becomes $\widetilde{SA} = |(\Sc \times \Ac)/G|$, the number of orbits. This motivates algorithmic design that explicitly incorporates symmetry, as we do in the rest of the paper. See Appendix~\ref{appx:tabular_Q} where we state and prove Theorem \ref{the:hoeffding}.




\subsection{Kernel-Based RL with Symmetries}\label{sec:kernel_RL_with_sym}

Kernel-based models are powerful tools for nonparametric function approximation in RL, offering flexible predictors and principled uncertainty estimates. Importantly, as seen in \cite{brown2024sample}, the pairing of the class of inner product kernels with orthogonal symmetries creates a powerful framework for incorporating symmetries naturally into predictors. 
See Appendix \ref{appendix:kernel} for standard definitions in kernel methods.


%
\paragraph{Invariant Kernels.} Let $k: \Zc \times \Zc \rightarrow \R$ be a positive definite inner product kernel. Assume $\Zc \hookrightarrow \R^d$ is an embedding and the orthogonal group $O(d)$ restricts its action on $\Zc$.  Let $G$ be a finite subgroup of $O(d)$. Then the following formula defines an invariant kernel: 
\begin{equation}\label{invariant_kernel}
k_G(z, z') = \frac{1}{|G|} \sum_{g \in G} k(g(z), z'),
\end{equation}
The RKHS $\Hc_{k_G}$ induced by $k_G$ consists of $G$-invariant functions.
Invariant kernels constrain learning to a smaller, structured function class, offering both theoretical and practical benefits in symmetric environments. In RL, they yield value and transition models that respect MDP symmetries, enhancing generalization and sample efficiency.
\subsection{Main Technical Assumptions}\label{sec:kernel_model_RL}
In our RL setting, we use a kernel-based model to predict the expected value function. For a given transition distribution $P(s'|\cdot,\cdot)$ and value function $v:\Sc \rightarrow \R$, we define $f = [P v]$ and use past observations to construct predictions and uncertainty estimates for $f$ via kernel ridge regression. Since the value functions evolve across steps due to the Markovian structure, we aim to control the estimation error at each step through confidence bounds. We will need the following standard assumption.
\begin{assumption}\label{ass:invRKHS_norm}
We assume $r_h(\cdot, \cdot), P_h(s'|\cdot, \cdot) \in \Hc_{k_G}$ for some invariant kernel $k_G$, and that $\|r_h\|_{\Hc_{k_G}}, \|P_h(s'|\cdot,\cdot)\|_{\Hc_{k_G}} \le 1$ for all $s' \in \Sc$ and $h \in [H]$.
\end{assumption}
Our analysis depends on the smoothness of the kernel $k$, which influences the function approximation properties of the associated RKHS. Specifically, we characterize the complexity of the kernel class through the decay of its Mercer eigenvalues.
\begin{assumption}[Eigendecay Profile]\label{ass:eigendecay}
Let $\{\lambda_m\}_{m=1}^\infty$ denote the Mercer eigenvalues of the invariant kernel $k_G$, ordered in decreasing magnitude. We assume the eigenvalues satisfy the decay condition $\lambda_m = \mathcal{O}((m |G|)^{-p})$ for some $p > 0$.
\end{assumption}
This assumption is mild and satisfied by many commonly used kernels. For instance, in the case of the Matérn kernel with smoothness parameter $\nu$ in $d$ dimensions, we have $p = 1 + \frac{2\nu}{d}$.
\subsection{LSVI Algorithm}\label{sec:lsvi}
We consider the Kernel-based Optimistic Value Iteration (KOVI) algorithm (\cite{yang2020provably}, see also Algorithm \ref{alg:KRVI}),  which applies optimistic LSVI in the kernel regression setting.
Least-Squares Value Iteration (LSVI) estimates $\widehat{Q}_h^t$ for the optimal $Q_h^\star$ at each step $h$ of episode $t$ via recursive Bellman updates. To encourage exploration, an upper confidence bonus $b_h^t: \Zc \to \R$ is added:
\begin{equation}
    Q_h^t = \min\{\widehat{Q}_h^t + \beta b_h^t,~ H - h + 1\}^+.
\end{equation}
Here, \( \beta > 0 \) is a tunable parameter, and the term \( \widehat{Q}_h^t + \beta b_h^t \) provides an optimistic estimate based on the principle of \emph{optimism in the face of uncertainty}. Specifically, the function \( \widehat{Q}_h^t \) is the kernel-based predictor for \( r_h + [P_h V^t_{h+1}] \) using the dataset \( \{r_h(z_h^{\tau}) + V^t_{h+1}(s_{h+1}^\tau)\}_{\tau=1}^{t-1} \) observed at inputs \( \{z_h^{\tau}\}_{\tau=1}^{t-1} \). Since rewards are bounded by $1$, the cumulative return from step $h$ is at most $H - h + 1$. At episode $t$, the agent follows a greedy policy $\pi^t$ based on the optimistic estimates $Q^t = \{Q_h^t\}_{h=1}^H$. Under Assumption~\ref{ass:invRKHS_norm}, the estimates $\widehat{Q}_h^t$ and bonus $b_h^t$ are computed via kernel ridge regression.
%
%
%
%
%
%
%
\begin{algorithm}[H]
\caption{The KOVI Algorithm}
\label{alg:KRVI}
\begin{algorithmic}[1]
\State \textbf{Input:} Regularization \( \lambda \), confidence parameter \( \beta_T(\delta) \), kernel \( k \), MDP \( M = (\Sc, \Ac, H, P, r) \)
\For{episode \( t = 1, 2, \dots, T \)}
    \State Observe initial state \( s_1^t \)
    \State Set \( V^t_{H+1}(s) = 0 \) for all \( s \in \Sc \)
    \For{step \( h = H, H-1, \dots, 1 \)}
        \State Compute \( Q_h^t(z) \) and \( V_h^t(z) \) via kernel ridge 
        \Statex \hspace{2.9em} regression
    \EndFor
    \For{step \( h = 1, 2, \dots, H \)}
        \State Take action \( a_h^t \gets \arg\max_{a \in \Ac} Q_h^t(s_h^t, a) \)
        \State Observe reward \( r_h(s_h^t, a_h^t) \) and next state 
        \Statex \hspace{3em}\( s_{h+1}^t \)
    \EndFor
\EndFor
\end{algorithmic}
\end{algorithm}
\section{THEORETICAL ANALYSIS}
\label{sec:algorithm}

\begin{figure*}[h]
    \centering
    \begin{subfigure}{0.25\textwidth}
        \centering
        \includegraphics[width=\textwidth]{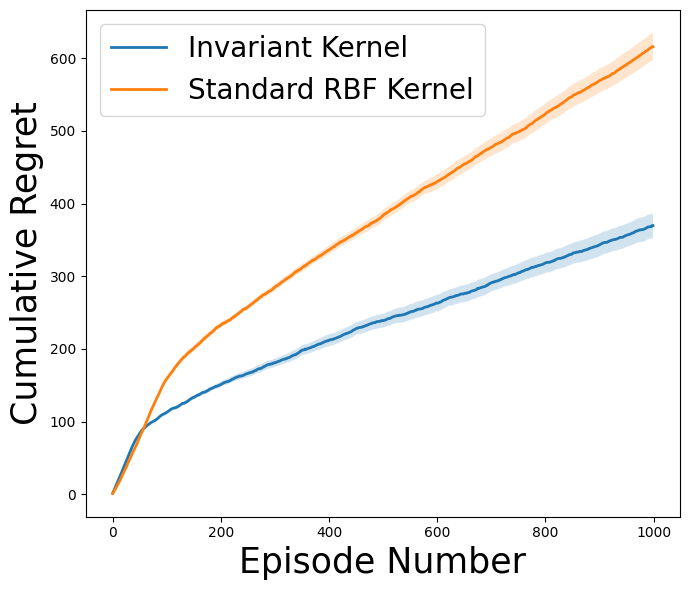}

        \caption{Synthetic setting }
        \label{fig:RBF_all_algos}
    \end{subfigure}
    \begin{subfigure}{0.21\textwidth}
        \centering
        \includegraphics[width=\textwidth]{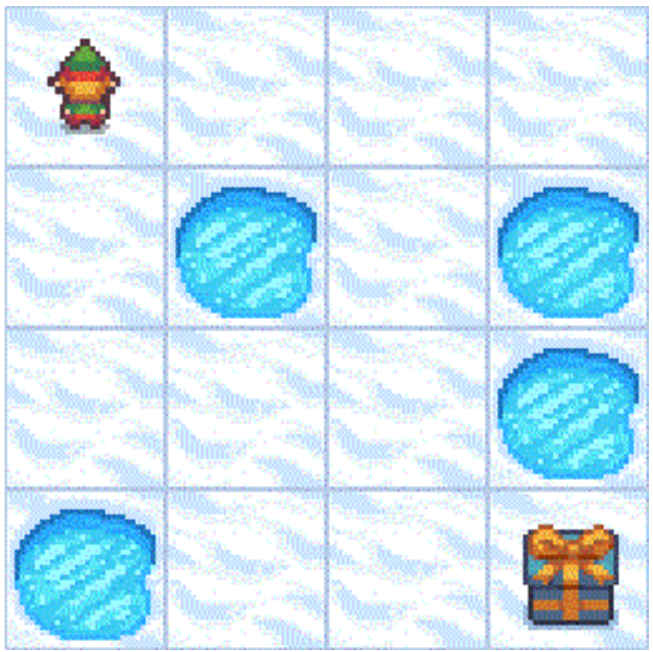} 
        \caption{Frozen Lake}
        \label{fig:Frozen Lake environment}
    \end{subfigure}
    \begin{subfigure}{0.25\textwidth}
        \centering
        \includegraphics[width=\textwidth]{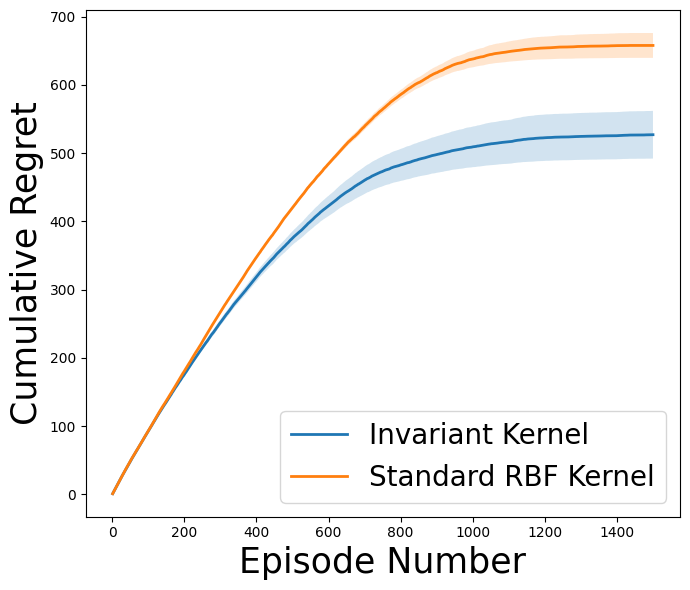} 
        \caption{Frozen Lake (Fixed)}
        \label{fig:Matern2.5_all_algos}
    \end{subfigure}
    \begin{subfigure}{0.25\textwidth}
        \centering
        \includegraphics[width=\textwidth]{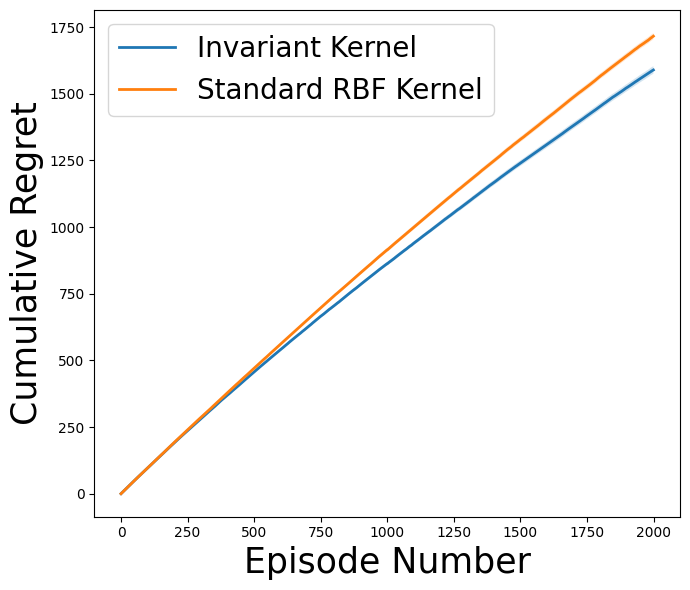} 
        \caption{Frozen Lake (Random )}
        \label{fig:Matern1.5_all_algos}
    \end{subfigure}
    \caption{Comparison of KOVI with invariant kernel vs. standard RBF kernel, across different settings. Cumulative regret is plotted against the number of episodes. (a) Regret for synthetic setting.  (b) A rendered frame of Frozen Lake. (c,d) Regret for the Frozen Lake, fixed and random setting, respectively. Regret is averaged over 20 random seeds. Shaded area represents the standard error.}
    \label{fig:overallresults}
\end{figure*}

\begin{table*}[]
\caption{Environments, state and action spaces, and group action used in our experiments.}
\label{tab:environments}
\resizebox{\textwidth}{!}{%
\begin{tabular}{llllll}
\cline{1-3}
Environment                  & Space                                                                                                   & Group Action                                                                                                                                                                                                 &  &  &  \\ \cline{1-3}
\multirow{2}{*}{Synthetic}   & S = {[}-1,1{]}                                                                                           & $\{x\to x, x \to -x\}$                                                                                                                                                                                              &  &  &  \\
                             & A = {[}-1,1{]}                                                                                           & $\{x\to x, x \to -x\}$                                                                                                                                                                                              &  &  &  \\ \cline{1-3}  
\multirow{3}{*}{Frozen Lake} & \parbox[l]{6cm}{S = {\{}$z=(x_i,y_i)_{i=1}^6 \in \mathbb C^6${\}}}                  & $D_4 = \langle z \to e^{\frac{i \pi}{2}}z, z \to \bar z \rangle \curvearrowright \mathbb C^6$                                                                     &  &  &  \\
                             & A = \{ (-1, 0),\ (1, 0),\ (0, -1),\ (0, 1) \}
                                        & \parbox[l]{9cm}{$D_4 \curvearrowright \mathbb C |_A$ } &  &  &  \\ \cline{1-3}
                             
\multirow{3}{*}{Synpl} & \parbox[l]{6cm}{S = {\{}$z=(x_i,y_i)_{i=1}^{16} \in \mathbb C^{16}${\}}}                  & $D_4 = \langle z \to e^{\frac{i \pi}{2}}z, z \to \bar z \rangle \curvearrowright \mathbb C^{16}$                                                                     &  &  &  \\
                             & A = {\{}$z=(x,y) \in \mathbb C${\}}                                        & \parbox[l]{9cm}{ {$D_4 \curvearrowright \mathbb C |_A$ } } &  &  &  \\ \cline{1-3}
\end{tabular}%
}
\end{table*}

\subsection{Information Gain} 
We define a kernel-specific 
complexity term, referred to as the maximum information gain $\Gamma_k(T)$ from $T$ observations, as follows~\citep{srinivas2009gaussian}:
\begin{equation}\label{eq:maxinfogain}
    \Gamma_k(T) = \max_{\{z_t\}_{t=1}^T \subset \gZ} \log \det\left(I + \frac{1}{\lambda} [k(z_i, z_j)]_{i,j=1}^T \right).
\end{equation}
This expression takes the maximum of the log determinant of a scaled and regularized kernel matrix, with the maximum over all input sequences. It quantifies the complexity of the kernel model and appears in related problems such as BO. In that context, \cite{brown2024sample} provides a bound on the information gain for invariant kernels, formally stated below.
\begin{lemma}[Brown et. al.]\label{lem:infoG}
    Consider the maximum information gain defined in~\eqref{eq:maxinfogain}. Under Assumption~\ref{ass:eigendecay}, we have
    \begin{equation}
        \Gamma_{k_G}(T) = \gO\left( 
        \frac{T^{\frac{1}{p}}}{|G|}
        \right).
    \end{equation}
\end{lemma}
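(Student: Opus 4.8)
The plan is to reduce the maximum information gain to the Mercer eigenvalues of $k_G$ and then feed in the decay profile granted by Assumption~\ref{ass:eigendecay}. The key preliminary step is the standard eigenvalue-based bound: for any truncation level $D \in \mathbb{N}$,
\[
\Gamma_{k_G}(T) = \gO\!\left( D \log T + \frac{T}{\lambda}\sum_{m > D}\lambda_m \right).
\]
This follows by writing $\log\det\!\left(I + \lambda^{-1}[k_G(z_i,z_j)]\right) = \sum_i \log(1 + \lambda^{-1}\hat{\lambda}_i)$ over the eigenvalues $\hat{\lambda}_i$ of the empirical Gram matrix, then splitting the feature map into the leading $D$ Mercer modes and the tail: the head contributes at most $D$ terms each of size $\gO(\log T)$, while the tail contributes at most $\frac{T}{\lambda}\sum_{m>D}\lambda_m$ via a trace comparison against the integral operator.

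Next I substitute $\lambda_m = \gO((m|G|)^{-p})$ and bound the tail by an integral. For $p > 1$,
\[
\sum_{m > D}\lambda_m = \gO\!\left(|G|^{-p}\!\int_D^\infty x^{-p}\,dx\right) = \gO\!\left(\frac{|G|^{-p}D^{1-p}}{p-1}\right),
\]
so that $\Gamma_{k_G}(T) = \gO\!\left(D\log T + T|G|^{-p}D^{1-p}\right)$. I then optimize over $D$ by balancing the two terms, which gives $D^\star \asymp T^{1/p}/\!\left(|G|(\log T)^{1/p}\right)$. Substituting $D^\star$ back, both terms collapse to $\gO\!\left(T^{1/p}(\log T)^{1-1/p}/|G|\right)$; the net factor $|G|^{-1}$ emerges precisely because the weight $|G|^{-p}$ in the eigenvalues combines with $(D^\star)^{1-p}\propto |G|^{p-1}$ after optimization to leave $|G|^{-1}$. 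Absorbing the polylogarithmic factor into $\gO(\cdot)$ yields the stated bound; the case $p \le 1$ is vacuous, since $T^{1/p}\ge T$ and $\Gamma_{k_G}(T)=\gO(T)$ holds trivially for a bounded kernel.

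The main obstacle is establishing the preliminary eigenvalue bound rigorously rather than the subsequent optimization, which is routine. The delicate points there are relating the finite Gram-matrix eigenvalues $\hat{\lambda}_i$ to the Mercer eigenvalues $\lambda_m$ of the integral operator (an interlacing/trace comparison), correctly tracking the dependence on the ridge parameter $\lambda$ and on $\sup_z k_G(z,z)$, and verifying that projecting onto the leading $D$ eigenfunctions loses only the claimed tail mass. Since the lemma is quoted from Brown et al., I would either invoke their statement directly or reproduce this eigendecomposition argument verbatim; crucially, the $|G|$-dependence itself demands no additional work beyond propagating the $|G|$ factor already present in Assumption~\ref{ass:eigendecay} through the optimization.
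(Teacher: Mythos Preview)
The paper does not prove this lemma; it is stated as a citation from \citet{brown2024sample} and invoked as a black box in the regret analysis. Your sketch is a correct outline of the standard eigenvalue-truncation argument for bounding maximum information gain (the $\Gamma_k(T) = \gO(D\log T + \tfrac{T}{\lambda}\sum_{m>D}\lambda_m)$ bound goes back to \citet{srinivas2009gaussian} and was sharpened in later work), and the optimization over $D$ with the $|G|$-scaled eigendecay is carried out correctly, recovering the $T^{1/p}/|G|$ rate up to logarithms. You also correctly flag that one can simply cite the result, which is exactly what the paper does.
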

\subsection{Covering Number of Function Class}
Following \cite{yang2020provably}, recall the definition of the set of $Q$-functions in KOVI,
\begin{equation}\label{eq:qfuncclass}
\begin{aligned}
    \gQ(B, R) = \bigl\{ Q^t_h \,\bigm|\,& 
        \forall \{z_j\}_{j=1}^t \subset \gZ, \; t \leq T, \\
        & ||Q_0||_\Hc \leq R, \; \beta \in [0, B] 
    \bigr\}.
\end{aligned}
\end{equation}
%

\paragraph{Covering Number:} Consider a set of real functions $\Fc$. For $\epsilon>0$, we define the minimum $\epsilon$-covering set $\Cc(\epsilon)$ as the smallest subset of $\Fc$ that covers it up to an $\epsilon$ error in $l_{\infty}$ norm. That is to say, for all $f\in\Fc$, there exists a $g\in\Cc(\epsilon)$, such that $\|f-g\|_{l_{\infty}}\le \epsilon$. We refer to the size of $\Cc(\epsilon)$ as the $\epsilon$-covering number and use the notation $\gN(\epsilon, B, R, G)$ to denote the $\epsilon$-covering number of $\gQ(B,R)$.  

\begin{theorem}\label{the:cov_num}
    With the notation above, we have:
    \begin{align}
        \log\gN(\epsilon, B, R, G) 
        &= \gO\Biggl( 
            \left(\frac{R^2}{\epsilon^2 |G|^{p}}\right)^{\!\frac{1}{p-1}}
            \left(1+\log\frac{R}{\epsilon}\right) \notag\\
        &\quad + 
            \left(\frac{B^2}{\epsilon^2 |G|^{p}}\right)^{\!\frac{2}{p-1}}
            \left(1+\log\frac{B}{\epsilon}\right)
        \Biggr).
    \end{align}
\end{theorem}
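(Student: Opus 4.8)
The plan is to follow the covering-number strategy of \citet{yang2020provably}, decomposing each $Q^t_h \in \gQ(B,R)$ into a predictor part and a bonus part, covering each separately, and tracking how the invariant eigendecay of Assumption~\ref{ass:eigendecay} reshapes both truncation levels. Every member of $\gQ(B,R)$ has the form $Q = \min\{\widehat{Q} + \beta b,\, H-h+1\}^+$, where $\widehat{Q}$ is a kernel ridge predictor with $\|\widehat{Q}\|_{\Hc_{k_G}} \le R$, the bonus is $b(z) = \sqrt{\phi(z)^\top \Lambda^{-1}\phi(z)}$ for a regularized feature-space Gram operator $\Lambda$, and $\beta \in [0,B]$. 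Since $x \mapsto \min\{x,c\}^+$ is $1$-Lipschitz, the triangle inequality gives $\|Q - Q'\|_\infty \le \|\widehat{Q} - \widehat{Q}'\|_\infty + \|\beta b - \beta' b'\|_\infty$, so it suffices to build an $(\epsilon/2)$-cover of the predictor class and of the bonus class and multiply the two.

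For the predictor term, I would expand $\widehat{Q}$ in the Mercer basis $\{\psi_m\}$ of $k_G$ and truncate at level $M$. Assuming the standard uniform bound on the eigenfunctions, Cauchy--Schwarz controls the sup-norm tail as $\|\widehat{Q} - \widehat{Q}_M\|_\infty \lesssim R\,(\sum_{m>M}\lambda_m)^{1/2}$. Substituting the invariant decay $\lambda_m = \gO((m|G|)^{-p})$ gives $\sum_{m>M}\lambda_m = \gO(|G|^{-p}M^{1-p})$, so forcing the tail below $\epsilon/2$ forces $M = \gO((R^2/(\epsilon^2|G|^p))^{1/(p-1)})$. Covering the remaining $M$-dimensional coefficient ball at scale $\epsilon$ contributes $\log N_{\mathrm{pred}} = \gO(M\log(R/\epsilon))$, which is precisely the first term of the bound; note that the factor $|G|^p$ enters through the truncation level itself, which is exactly where symmetry buys the reduction.

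For the bonus term I would cover jointly over $\beta \in [0,B]$, an interval contributing only a $\log(B/\epsilon)$ factor, and over the operator $\Lambda^{-1}$. Using $|\sqrt{a}-\sqrt{b}| \le \sqrt{|a-b|}$, two bonuses with operators $A,A'$ obey $\|b_A - b_{A'}\|_\infty \lesssim \|A-A'\|_{\mathrm{op}}^{1/2}\sup_z\|\phi(z)\|$, and since $\beta \le B$ scales the bonus, it suffices to cover $\Lambda^{-1}$ in operator norm at scale $\sim \epsilon^2/B^2$. Truncating the feature space to $M'$ coordinates with the same eigendecay calculation, now carrying the radius $B$, yields $M' = \gO((B^2/(\epsilon^2|G|^p))^{1/(p-1)})$; but covering an $M'\times M'$ positive-semidefinite matrix in operator norm costs $\log N \sim M'^2\log(\cdot)$, and squaring $M'$ produces both the exponent $2/(p-1)$ and the $B^2$ radius of the second term. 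Adding $\log N_{\mathrm{pred}}$ and $\log N_{\mathrm{bonus}}$ gives the claimed sum.

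The main obstacle I anticipate is the bonus analysis: passing from an $l_\infty$ cover of $z\mapsto\sqrt{\phi(z)^\top\Lambda^{-1}\phi(z)}$ to an operator-norm cover of $\Lambda^{-1}$ via the square-root Lipschitz inequality, and then truncating the infinite-dimensional operator so that the neglected high-frequency tail is uniformly small while keeping the effective dimension $M'$ (hence $M'^2$) sharp. A secondary technical point is justifying the uniform eigenfunction bound needed to upgrade $L_2$ tail control to $l_\infty$ tail control in the predictor step; this is where the smoothness of $k$ and the explicit $|G|$-dependence of the invariant Mercer spectrum must be invoked with care, so that the group size appears cleanly as $|G|^p$ in both truncation thresholds rather than being absorbed into constants.
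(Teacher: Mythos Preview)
Your proposal is correct and follows essentially the same route as the paper: the same three-way split into predictor, $\beta$-interval, and bonus via the $1$-Lipschitz clipping, the same Mercer truncation with Cauchy--Schwarz tail control for the RKHS ball, and the same square-root trick $|\sqrt{a}-\sqrt{b}|\le\sqrt{|a-b|}$ to reduce the bonus cover to a squared-scale cover whose log-cardinality scales like the square of the truncation dimension. The only cosmetic difference is that the paper parameterizes $b^2$ via its diagonal spectral coefficients $b^2(z)=\sum_m\gamma_m\lambda_m\phi_m^2(z)$ with $\gamma_m\in[0,1]$ rather than covering $\Lambda^{-1}$ as a full PSD operator, but both representations yield the same $M'^2$ factor and hence the identical bound.
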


This result provides the explicit dependence of the covering number on the number of symmetries, i.e. all other constants are independent of $\epsilon, B, R$ and $|G|$. See Appendix~\ref{app:proof_cov} for proof.


\subsection{Regret Bounds and Discussions}

Recall Theorem~4.2 and the associated notation in~\cite{yang2020provably}, which provides the regret bound in terms of the maximal information gain and covering number.

\begin{theorem}[Yang et. al.]\label{the:regB}
    Consider the episodic MDP described in Section~\ref{sec:episodeMDP}. Under Assumption~\ref{ass:invRKHS_norm}, for the KOVI algorithm, there exist $B$ and $R$ such that for all $T\in\mathbb{N}$ and $\epsilon\in(0,1)$, denoting $\Lambda(k_G; B,R,\epsilon, T) = (\Gamma_{k_G}(T))^{\frac{1}{2}}((\Gamma_{k_G}(T)+\log \gN(\epsilon,B,R,G))T)^{\frac{1}{2}}+T\epsilon$: 
    \begin{equation*}
        R(T) = \gO\left(\Lambda(k_G; B,R,\epsilon, T)\right)
    \end{equation*}
    
\end{theorem}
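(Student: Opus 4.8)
The plan is to invoke the general regret analysis of KOVI from \cite{yang2020provably} (their Theorem~4.2) with the base kernel $k$ replaced throughout by the invariant kernel $k_G$ defined in~\eqref{invariant_kernel}. Since $k_G$ is itself a positive definite kernel---a uniform average of $k(g(\cdot),\cdot)$ over the finite group $G$---the entire kernel-regression machinery (predictor, posterior variance, and confidence bonus $b_h^t$) is well defined with $k_G$ as the base kernel, and Assumption~\ref{ass:invRKHS_norm} guarantees that $r_h$ and $P_h(s'|\cdot,\cdot)$ lie in $\Hc_{k_G}$ with norm bounded by $1$, which is precisely the regularity condition their analysis requires. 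Thus the first step is to verify that each ingredient of their argument transfers verbatim, the only changes being that the information-gain term becomes $\Gamma_{k_G}(T)$ and the covering number of the induced $Q$-function class becomes $\gN(\epsilon,B,R,G)$.

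The core argument then follows the established steps. First I would establish optimism: via a self-normalized concentration bound for kernel ridge regression in $\Hc_{k_G}$, there is a choice of $\beta$ (and hence constants $B,R$) for which, with high probability and simultaneously over all episodes and steps, the estimate $\widehat{Q}_h^t + \beta b_h^t$ upper-bounds the true $Q_h^\star$. This uniform control is exactly where the covering number enters: because $V_{h+1}^t$ is a data-dependent random function, the concentration must hold uniformly over the class $\gQ(B,R)$ of~\eqref{eq:qfuncclass}, so one discretizes this class at resolution $\epsilon$ and takes a union bound, contributing the $\log\gN(\epsilon,B,R,G)$ factor and the additive $T\epsilon$ discretization error. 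Next I would decompose the per-episode regret $V_1^\star(s_1^t)-V_1^{\pi^t}(s_1^t)$ telescopically across steps into a sum of confidence bonuses $\beta b_h^t$ plus a martingale-difference sequence capturing the gap between expected and realized transitions.

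Finally, summing over $t$ and $h$, the martingale term is controlled by Azuma--Hoeffding, giving an $\gO(\sqrt{T})$ contribution, while the accumulated bonuses $\sum_{t,h} b_h^t$ are bounded by the elliptical-potential (information-gain) argument, yielding a factor of $\sqrt{\Gamma_{k_G}(T)}$; with $\beta$ tuned to scale like $\sqrt{\Gamma_{k_G}(T)+\log\gN(\epsilon,B,R,G)}$, combining these reproduces $\Lambda(k_G;B,R,\epsilon,T)$. The main obstacle is the uniform-concentration step: one must confirm that the self-normalized bound and the associated covering argument are genuinely insensitive to replacing $k$ by $k_G$, i.e. that the symmetrized Mercer features of $k_G$ still yield a valid self-normalized martingale inequality and that the $Q$-class covering is correctly quantified by $\gN(\epsilon,B,R,G)$ rather than the ambient covering number---this is precisely what Theorem~\ref{the:cov_num} supplies, so the remaining work is to check that no step of \cite{yang2020provably} relies on properties of $k$ beyond positive definiteness and the boundedness in Assumption~\ref{ass:invRKHS_norm}.
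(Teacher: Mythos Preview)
Your proposal is correct and in fact more detailed than the paper's treatment: the paper does not prove Theorem~\ref{the:regB} at all but simply cites it as Theorem~4.2 of \cite{yang2020provably}, applied verbatim with the invariant kernel $k_G$ in place of a generic kernel. Your outline of the underlying Yang et al.\ argument (optimism via self-normalized concentration, uniform control over $\gQ(B,R)$ via an $\epsilon$-cover, telescoping regret decomposition, elliptical-potential bound on the accumulated bonuses) and your observation that nothing in that argument uses any property of the base kernel beyond positive definiteness and the RKHS-norm bounds of Assumption~\ref{ass:invRKHS_norm} is exactly the justification one would want, and it is more than the paper itself provides.
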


Under Assumption~\ref{ass:eigendecay}, we may follow the choices in \cite{yang2020function} for $B:=B_T$ and $\epsilon = \epsilon_T$, to extract the dependency of $\Lambda(k_G; B,R,\epsilon, T)$ on $|G|$ explicitly. Namely, Lemma \ref{lem:infoG} Theorem \ref{the:cov_num} give:

\begin{cor}\label{cor:regret}
Under the assumptions above, $||Q^T_h||_\Hc \leq R_T(k_G) := 2H\sqrt{\Gamma_{k_G}}$, for $B_T := H \log(TH)\cdot T^{k^*}$ and $\epsilon = H/T$ we have:
\begin{equation}
    \Lambda(k_G; B_T,R_T(k_G),\epsilon, T) \leq \frac{1}{|G|}\Lambda(k; B_T,R_T(k),\epsilon, T)
\end{equation}
\begin{equation}
    R(T) = \gO\left(\frac{1}{|G|}\Lambda(k; B_T,R_T,\epsilon, T)\right)
\end{equation}
\end{cor}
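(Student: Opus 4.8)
The plan is to derive the corollary by specializing the generic regret bound of Theorem~\ref{the:regB} to the invariant kernel $k_G$ under the parameter choices $B = B_T$, $\epsilon = H/T$, and $R = R_T(k_G) = 2H\sqrt{\Gamma_{k_G}}$, and then to extract the explicit dependence on $|G|$ term by term. The first task is the norm bound $\|Q_h^T\|_{\Hc} \le R_T(k_G)$: this is the standard consequence of the KOVI construction, where $\widehat{Q}_h^t$ is a kernel ridge regression predictor whose RKHS norm is controlled by the effective dimension, giving $\|Q_h^T\|_{\Hc} = \gO(H\sqrt{\Gamma_{k_G}})$ (cf.~\cite{yang2020provably}); this justifies taking $R = R_T(k_G)$. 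With the norm bound in hand, Theorem~\ref{the:regB} reduces the corollary to the single inequality $\Lambda(k_G; B_T, R_T(k_G), \epsilon, T) \le |G|^{-1}\Lambda(k; B_T, R_T(k), \epsilon, T)$.

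Second, I would record the two scaling identities that drive everything. Lemma~\ref{lem:infoG} gives $\Gamma_{k_G}(T) = \gO(|G|^{-1} T^{1/p})$, i.e.\ $\Gamma_{k_G}(T)$ is a factor $|G|^{-1}$ smaller than the ungrouped $\Gamma_k(T) = \gO(T^{1/p})$; consequently $R_T(k_G) = 2H\sqrt{\Gamma_{k_G}} = |G|^{-1/2} R_T(k)$. Substituting $R = R_T(k_G)$ into Theorem~\ref{the:cov_num} and collecting the explicit $|G|^p$ factors, the first covering-number term acquires a prefactor $|G|^{-(p+1)/(p-1)}$ (since $R^2/|G|^p = R_T(k)^2/|G|^{p+1}$) and the second term a prefactor $|G|^{-2p/(p-1)}$, relative to the corresponding terms of $\log\gN(\epsilon, B_T, R_T(k), 1)$. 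Thus both pieces of $\log\gN(\epsilon, B_T, R_T(k_G), G)$ are strictly smaller than their ungrouped counterparts, and crucially $B_T$ and $\epsilon$ are held identical across the two kernels so that the only difference enters through $\Gamma$, $R_T$, and the explicit $|G|$ factors.

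Third, I would decompose $\Lambda$. Writing $\Gamma_G := \Gamma_{k_G}(T)$ and $N_G := \log\gN(\epsilon, B_T, R_T(k_G), G)$, the inequality $\sqrt{a+b}\le\sqrt{a}+\sqrt{b}$ gives $\Lambda(k_G; B_T, R_T(k_G),\epsilon,T) \le \Gamma_G\sqrt{T} + \sqrt{\Gamma_G N_G T} + T\epsilon$, and splitting $N_G$ into its two terms splits the middle summand further. The first summand satisfies $\Gamma_G\sqrt{T} = |G|^{-1}\Gamma_k\sqrt{T}$, exactly the target factor. For the cross terms I combine the $|G|^{-1/2}$ from $\sqrt{\Gamma_G}$ with the covering-number prefactors: the first contributes $|G|^{-(1/2 + (p+1)/(2(p-1)))} = |G|^{-p/(p-1)}$ and the second $|G|^{-(1/2 + p/(p-1))} = |G|^{-(3p-1)/(2(p-1))}$. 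For every $p > 1$ both exponents exceed $1$, so each cross term is bounded by $|G|^{-1}$ times the corresponding ungrouped quantity. Finally $T\epsilon = H$ is independent of $|G|$ and of lower order than the $\Gamma_G\sqrt{T}\sim |G|^{-1}T^{1/2+1/p}$ term as $T\to\infty$, so it is absorbed into the $\gO$. Summing the three bounds gives $\Lambda(k_G;\cdots)\le |G|^{-1}\Lambda(k;\cdots)$, and feeding this back into Theorem~\ref{the:regB} yields the stated regret.

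The main obstacle is the exponent bookkeeping in the third step: one must verify that after the substitution $R = R_T(k_G) = |G|^{-1/2}R_T(k)$ the combined powers of $|G|$ in the two cross terms, namely $\tfrac{p}{p-1}$ and $\tfrac{3p-1}{2(p-1)}$, are at least $1$ uniformly over the admissible range $p>1$, so that the weakest reduction (coming from the $\Gamma_G\sqrt{T}$ term) sets the overall rate at exactly $|G|^{-1}$ rather than something faster or slower. A secondary subtlety is consistency of the schedule: the norm bound $\|Q_h^T\|_{\Hc}\le R_T(k_G)$ and the parameters $(B_T,\epsilon)$ must be precisely those under which Theorem~\ref{the:regB} holds, so that the comparison between $\Lambda(k_G;\cdots)$ and $\Lambda(k;\cdots)$ is made at identical $B_T$ and $\epsilon$ and the $|G|$ dependence is isolated cleanly.
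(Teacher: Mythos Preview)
Your proposal is correct and follows essentially the same route as the paper: substitute the choices $B_T$, $R_T(k_G)=2H\sqrt{\Gamma_{k_G}}$, $\epsilon=H/T$ into the covering-number bound of Theorem~\ref{the:cov_num}, extract the $|G|$-exponent of each resulting piece, and observe that every one of them exceeds $1$ so that the $\Gamma_{k_G}$ contribution (with its exact $|G|^{-1}$ factor) is the bottleneck in $\Lambda$. The paper's own proof is considerably terser---it records only the $T$- and $|G|$-dependence of the covering number and asserts it is dominated by $\Gamma_{k_G}$---whereas you carry out the full decomposition of $\Lambda$ via $\sqrt{a+b}\le\sqrt a+\sqrt b$ and verify each cross-term exponent ($p/(p-1)$ and $(3p-1)/(2(p-1))$) explicitly; your treatment is more complete and your exponent bookkeeping is correct.
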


 

\paragraph{Remarks.} Substituting $p=1+\frac{2\nu}{d}$ in Corollary \ref{cor:regret}, we have, for large $d$, and any $\nu>0$:
    \begin{equation}
    R(T) = \tilde\gO\left(\frac{1}{|G|} H^2 \cdot T^{\frac{3d(d+1)+2\nu}{4\nu + 2d(d+1)}}\right)
\end{equation}
Contrast this result, with e.g. Theorem \ref{the:hoeffding} or with regret in BO, for which $R_T \sim \sqrt{\Gamma_{k_G}(T)}$ (\cite{wang2024improvedregretboundsbayesian}) which both pose a factor of $\frac{1}{|G|}$ in the bound. Why does KOVI achieve better improvement with $|G|$ over both these bounds? Compared with BO, the suboptimality comes for the acquisition function. Compared to KOVI in the tabular setting, however, the bound is consistent, since both $\mathcal N(\epsilon, B, R)\text{ and } \Gamma(T) \sim SA = |\Sc \times \Ac|$. Choosing $\beta = HSA$ in Theorem \ref{the:regB} gives: $R(T)=\tilde\gO(H^2SA\sqrt{T})$ which is consistent with the $\frac{1}{|G|}$ reduction seen above. 

 \begin{figure*}[h]
    \centering
    \begin{subfigure}{0.30\textwidth}
        \centering
        \includegraphics[width=\textwidth]{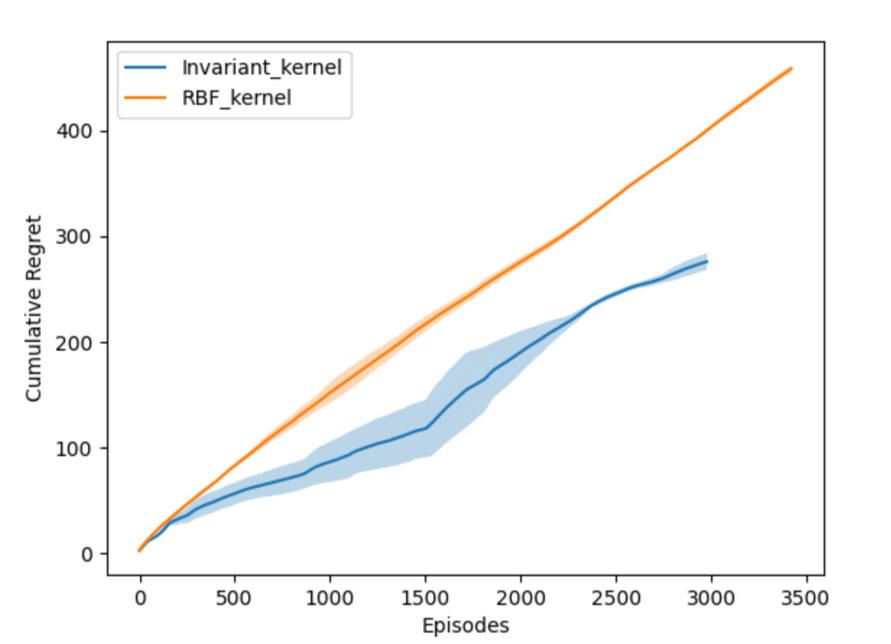}
        \caption{Regret comparison}
        \label{fig:synpl_regret}
    \end{subfigure}
    \begin{subfigure}{0.22\textwidth}
        \centering
        \includegraphics[width=\textwidth]{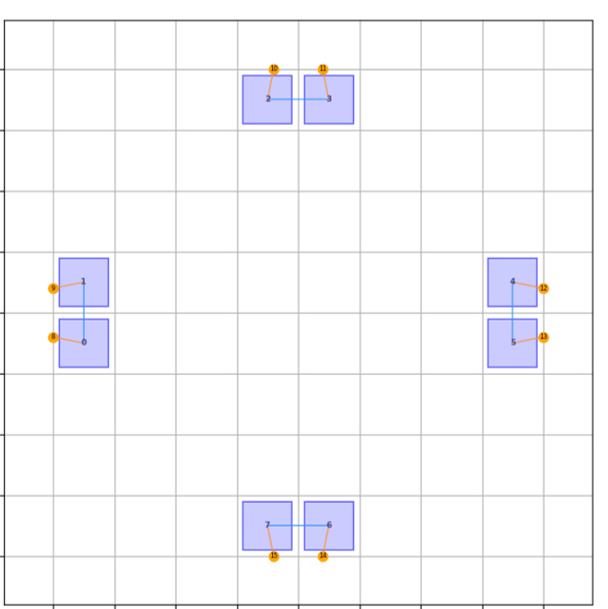}
        \caption{Optimal placement}
        \label{fig:synpl_optimal}
    \end{subfigure}
    \begin{subfigure}{0.22\textwidth}
        \centering
        \includegraphics[width=\textwidth]{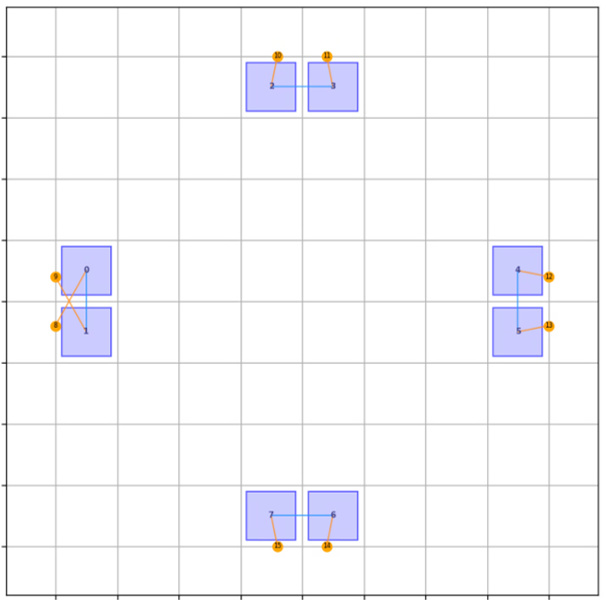}
        \caption{Best placement}
        \label{fig:synpl_best}
    \end{subfigure}
    \begin{subfigure}{0.22\textwidth}
        \centering
        \includegraphics[width=\textwidth]{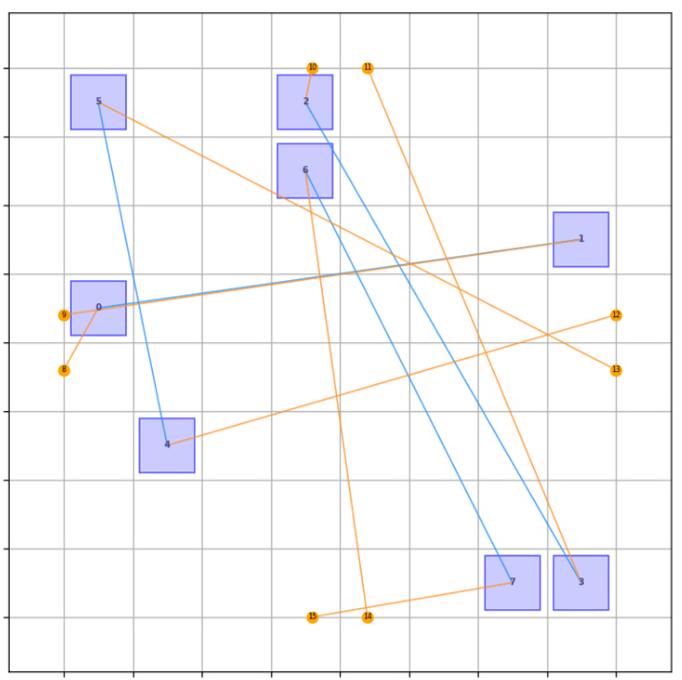} 
        \caption{Random placement}
        \label{fig:synpl_random}
    \end{subfigure}
    \caption{Regret comparison of invariant vs. RBF kernel for SynPl (a); Optimal placement (b); Best placement achieved by KOVI (c); Baseline random placement from the random policy (d).}
    \label{fig:synpl}
\end{figure*}

\begin{figure}[h]
    \centering
    \begin{subfigure}[b]{0.4\textwidth}
        \centering
        \includegraphics[width=\textwidth]{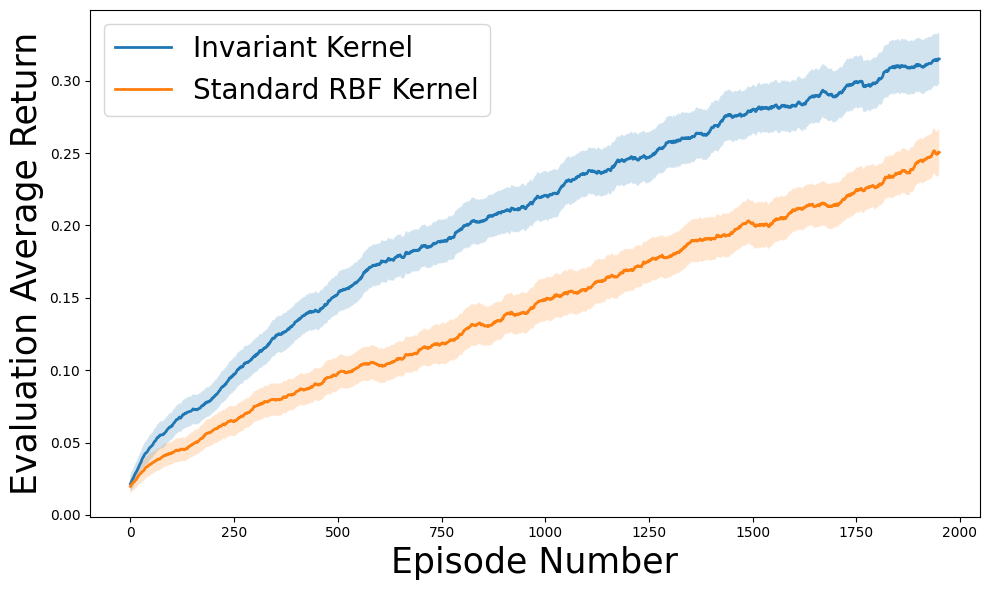}
        \caption{Test performance}
        \label{fig:Test_return_randomized}
    \end{subfigure}
    \hfill
    \begin{subfigure}[b]{0.4\textwidth}
        \centering
        \includegraphics[width=\textwidth]{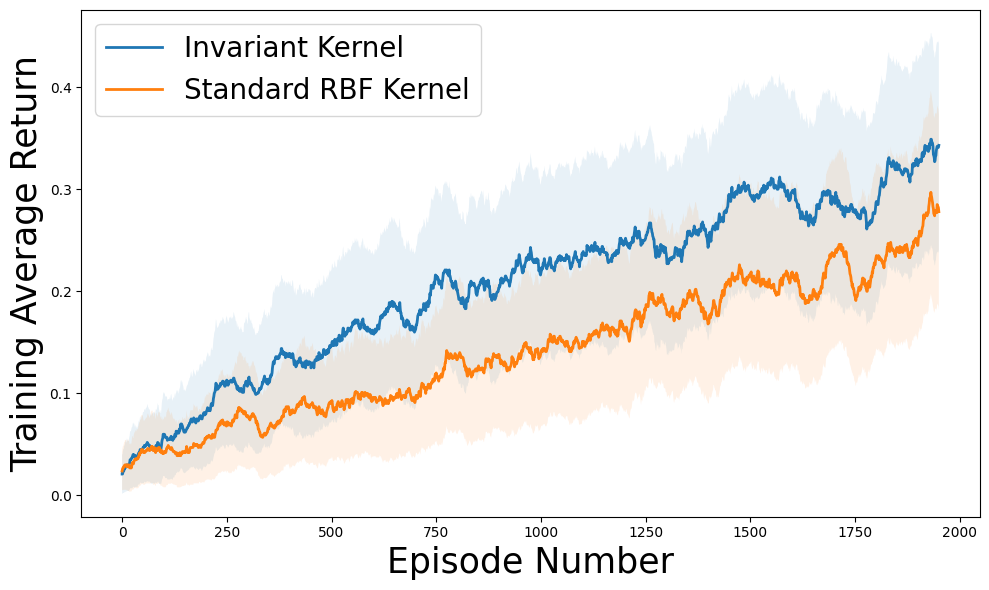}
        \caption{Training Performance}
        \label{fig:Train_return_randomized}
    \end{subfigure}
    \caption{Average return computed on evaluation (test) data (a) and training data (b) vs. number of training episodes for the Random Layout Frozen Lake environment. Shaded areas represent standard error.}
    \label{fig:Test_return}
\end{figure}

\section{EXPERIMENTAL RESULTS}
\label{sec:experiments}
\begin{figure}[h] 
    \centering
    \includegraphics[width=0.4\textwidth]{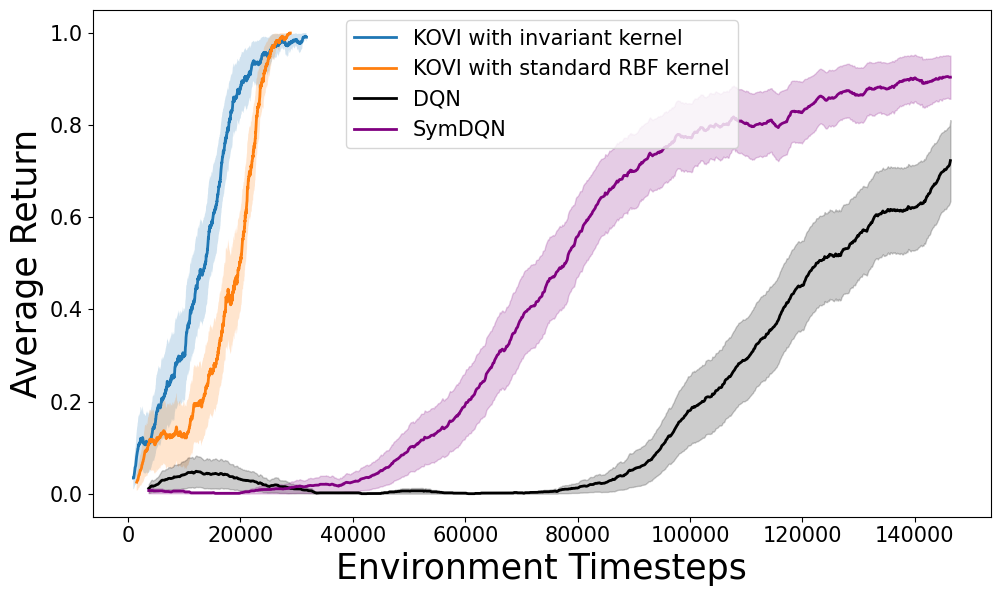}
    \caption{Comparison of KOVI with a standard RBF kernel, KOVI with an invariant kernel, DQN, and SymDQN on the FrozenLake (Fixed) environment. Average episodic return versus number of environment timesteps, averaged over 20 random seeds. The shaded area represents the standard error.}
    \label{fig:DQN_KOVI}
\end{figure}
We demonstrate the performance improvement gained by incorporating known invariances in three settings: (1) a synthetic MDP setting with group symmetry, (2) a symmetrized Frozen Lake environment from the OpenAI Gym library and (3) a 2D placement problem.
The code is available at:
\url{https://github.com/mtkresearch/IQL}.
\paragraph{Synthetic Setting:} We choose $H=10$ and $\mathcal{S}=\mathcal{A}=[-1,1]$ discretized into 10 evenly spaced points. The group we consider has size $|G| = 2$ and includes the identity and inversion transformations. The reward function $r$ and transition function $P$ are constructed to be invariant under $G$, by sampling functions from the RKHS of the invariant kernel $k_G$ (eq.~\ref{invariant_kernel}). We choose RBF as the base kernel $k$, and we train for a total of 1000 episodes. For a more detailed explanation of how $r$ and $P$ are generated and the hyperparameters used, please refer to Appendix~\ref{app:synthetic}. 
\paragraph{Frozen Lake:} We build a navigation task based on OpenAI Gym’s FrozenLake-v1 environment which is invariant to the 8-fold symmetry group of the square, $D_4$. The environment itself depends on a layout of holes, start and goal positions, see Figure~\ref{fig:Frozen Lake environment}. 
Each state is represented as a concatenation of 2-dimensional coordinate vectors 
$(x,y)$ specifying the positions of the agent, goal, and four holes, resulting in a 
6×2 state vector. Actions are represented as 2D discrete coordinate vectors.

At each episode, the agent is tasked to solve the problem in any one of the 8 possible transformations of the layout according to $D_4$, sampled uniformly at random. In Figure \ref{fig:overallresults}, this is denoted by \textit{Fixed Layout}, denoting the fact that the only change in environment that the agent observes is due to the presence of symmetry in $D_4$. 
In order to exploit a more realistic scenario, where symmetry naturally occurs in the environment, we also train KOVI on a variant of Frozen Lake with fully randomized Layouts. In Figure \ref{fig:overallresults} we refer to these experiments as \textit{Random Layout}.
In this setting, a new layout of the starting point and holes is randomly generated at the beginning of each episode.
Four holes are placed randomly, and only configurations that allow a valid path from the start to the goal state are accepted. 
This setting is significantly more challenging due to the greater diversity of possible configurations. We train for a total of 2000 episodes. For details about the hyperparameters used, please refer to Appendix~\ref{app:frozen_lake}. 

In Figure ~\ref{fig:overallresults} we plot the cumulative regret during training on the synthetic setting, the Fixed and Random Layout Frozen Lake. For the latter only, we also evaluate the episodic return on 40 randomly sampled test environments and report the average performance on these test environments in each training episode. 
Figure~\ref{fig:Test_return_randomized} shows the average return on these test environments over the course of training, averaged across 20 random seeds. While the return has not fully converged— due to the complexity introduced by the large number of random grid configurations—the invariant kernel consistently outperforms the standard RBF kernel. For reference, we also plot the average return on the training environment in Figure~\ref{fig:Train_return_randomized}.
\paragraph{Comparison vs Neural Networks}
In Figure \ref{fig:DQN_KOVI} we present a comparison of relative sample complexity of our algorithm versus an Q-Network on FrozenLake implemented with equivariant MLP layers. The kernel regression enjoys consistent improvements in sample complexity versus the more data inefficient Q-network. See additional experimental details in Appendix \ref{appendix:dqn_v_kovi}.
\paragraph{2-Dimensional Placement:} The placement problem can be described as a 2D planar embedding of a graph of components under geometric constraints. This problem appears frequently in contexts such as smart-city design, building floorplanning, electronic design and VLSI. We build a simplified environment where we can study this task, denoted by \textit{SynPl} in Figure \ref{fig:synpl}.
The task is to correctly place 8 units on an 8x8 grid with the following constraints: 1. No two units should overlap. 2. Each edge of the graph should be enclosed in a minimal area subset of the grid. 
While the placement task may be easy to solve for humans by inspection, leveraging our vast compendium of geometric reasoning and intuition, even simple cases remain difficult for machines without carefully tailored backend representations (see, e.g. GNNs used to encode the graphical information \cite{mirhoseini2021graph}, \cite{chang2022flexible}). 1-shot placement methods like BO perform poorly in this setting due to:
\begin{enumerate}
    \item The subset of invalid configurations lies at the boundary of the space of optimal configurations (where the edge length is minimized). If overlaps are handled via penalties in the optimization objective, this requires modeling with progressively less smooth kernels, which rapidly decreases sample complexity guarantees. Forcing the acquisition function to only sample valid configurations (according to the posterior) is not without its own issues, with rejection sampling being the most straightforward, yet cumbersome approach.
    \item The configuration space naturally lies on a $2n$ dimensional manifold in $\mathbb R^{2n}$. While lower-dimensional embeddings of this data exist, they don't benefit from the natural action of the isometry group on $\mathbb R^{2n}$, missing the crucial geometric information in the configuration space. Moreover, such embeddings represent a strenuous design decision on the experimenter. On the other hand, as $n$ increases, this leads to high-dimensionality BO, which carries its own sample complexity issues (\cite{eriksson2021high}).
\end{enumerate}
A better design decision is to leverage the sequential nature of the placement problem to produce valid configurations with progressively higher objective values $\Phi(s)$.
The most general form of the environment would sample all canvas configurations uniformly
however we consider a simplified setting 
and 
fix a single problem with a unique symmetric solution. 
Specifically, the environment places the units sequentially, with each action being the selection of one of the 64 grid cells. An action mask is applied to avoid overlaps. 
The state, at any one point consists of the position of the preplaced units in Cartesian coordinated, with padding for the yet-to-be-placed units.
The reward at each step $t$ is a function $r(s_t,a_t)$, defined as follows. Let $\pi^*$ be an oracle policy for all states, which is able to optimally place all remaining units in all states. Then $r(s_{t-1},a_{t-1}) = \Phi(\pi^*(s_t)^{[H-t]}) - \Phi(\pi^*(s_{t-1})^{[H-t+1]})$, the potential difference due to selecting $a_{t-1}$. As this quantity is inaccessible, we replace the oracle policy $\pi^*$ by a known reference policy $\pi_0$. As this need not be a deterministic policy, we sample several trajectories from $\pi_0$ and average the $\Phi$ estimates correspondingly. As the horizon for our problem is low ($H=8$), in our experiments we take $\pi_0$ to be the random policy, and do not observe significant degradations due to this choice.
In our experiments we observe that the invariant kernel consistently produces lower regret, even in scenarios where symmetries are not inherently built into the environment. Empirical results differ from their theoretical counterparts in that the asymptotic expected gap isn't always recovered for small $T$. Our experiments thus strengthen the theoretical results by providing insight into the behavior of KOVI with and without invariant kernel in finite time. See Figure \ref{fig:synpl} for regret and placements.
\subsection{Limitations}
\label{sec:lim}
The KOVI algorithm suffers from general limitations which depend either on its theoretical formulation or on implementation. The two principal points are the choice of hyperparameters, primarily $\beta$ and the overall complexity of the algorithm. 
Firstly, notice that since at every step $H$ GP models are fitted to an experience buffer which contains all samples acquired to that point, the complexity of each step is $\gO(Ht^3)$ due to inverting the kernel matrix, and summing over $t \in [0,T]$ produces a time complexity of $\gO(HT^4)$. This is prohibitive for realistic environments for which more than a couple thousand samples need to be collected.
This however can be remedied as follows: 1. The time complexity can be brought down to $\gO(Ht^3)$, matching that of BO with exact kernel inversion, by caching previous calculations in the Cholesky decomposition of each GP model and performing only rank 1 updates. 2. Secondly, sparse GP methods can be applied (with or without optimizing the inducing point acquisition) to compress the collected samples into a manageable size for kernel inversion. 
In terms of algorithmic stability, the parameter $\beta$ is empirically identified as the main contributor.
In practice it is kept fixed throughout our runs and optimized as a hyperparameter, which is in contrast to the choice of $\beta = B_T$  in the theoretical formulation (which, in the tabular setting scales as $HSA$). This likely produces suboptimal results, however, more research needs to be done for adaptively setting $\beta$ in terms of $T$ (or $|G|$) in this setting.
In terms of convergence properties of the algorithm, the $\beta \cdot \sigma_T(z)$ exploration bonus, induces a mode-hopping behavior, by over-estimating $Q$-values early in the runtime. Sometimes optimal solutions are found and all neighboring solutions are explored early on in the training process, however, the algorithm subsequently selects different, less explored configurations in an effort to reduce uncertainty there. This is not problematic in synthetic settings where the optimal $V^*(s)$ is known ($s$ still hidden), however, in realistic settings, this behavior can contribute to a large proportion of the runtime.
\section{CONCLUSION}
\label{sec:conc}
Group symmetry in MDPs provides powerful structural constraints that, when incorporated effectively into the RL pipeline—via equivariant policies and invariant function spaces—can significantly reduce sample complexity. 
This work is the first to quantify the sample efficiency gains of incorporating group invariances into a fully invariant kernel within the kernel-based RL setting. We derive novel bounds on the maximum information gain and covering number for the invariant kernel, and demonstrate that our improved regret bounds are supported empirically through experiments on synthetic domains, classical RL benchmarks, and a practical 2D placement task, highlighting real-world applicability. For future work, it would be valuable to evaluate our symmetry-aware LSVI algorithm on more complex or high-dimensional input domains, as well as to extend our theoretical results to settings with partial or approximate symmetries.

\clearpage
\bibliography{references}

@article{yang2020provably,
  title={Provably efficient reinforcement learning with kernel and neural function approximations},
  author={Yang, Zhuoran and Jin, Chi and Wang, Zhaoran and Wang, Mengdi and Jordan, Michael},
  journal={Advances in Neural Information Processing Systems},
  volume={33},
  pages={13903--13916},
  year={2020}
}

@article{brown2024sample,
  title={Sample-efficient bayesian optimisation using known invariances},
  author={Brown, Theodore and Cioba, Alexandru and Bogunovic, Ilija},
  journal={Advances in Neural Information Processing Systems},
  volume={37},
  pages={47931--47965},
  year={2024}
}

@inproceedings{srinivas2009gaussian,
  title={Gaussian process optimization in the bandit setting: No regret and experimental design},
  author={Srinivas, Niranjan and Krause, Andreas and Kakade, Sham M and Seeger, Matthias},
  booktitle={International Conference on Machine Learning},
  pages={1015--1022},
  year={2010},
}

@article{silver2016mastering,
  title={Mastering the game of Go with deep neural networks and tree search},
  author={Silver, David and Huang, Aja and Maddison, Chris J and Guez, Arthur and Sifre, Laurent and Van Den Driessche, George and Schrittwieser, Julian and Antonoglou, Ioannis and Panneershelvam, Veda and Lanctot, Marc and others},
  journal={nature},
  volume={529},
  number={7587},
  pages={484--489},
  year={2016},
  publisher={Nature Publishing Group}
}

@article{kahn2017uncertainty,
  title={Uncertainty-aware reinforcement learning for collision avoidance},
  author={Kahn, Gregory and Villaflor, Adam and Pong, Vitchyr and Abbeel, Pieter and Levine, Sergey},
  journal={arXiv preprint arXiv:1702.01182},
  year={2017}
}

@article{mirhoseini2021graph,
  title={A graph placement methodology for fast chip design},
  author={Mirhoseini, Azalia and Goldie, Anna and Yazgan, Mustafa and Jiang, Joe Wenjie and Songhori, Ebrahim and Wang, Shen and Lee, Young-Joon and Johnson, Eric and Pathak, Omkar and Nova, Azade and others},
  journal={Nature},
  volume={594},
  number={7862},
  pages={207--212},
  year={2021},
  publisher={Nature Publishing Group UK London}
}

@article{fawzi2022discovering,
  title={Discovering faster matrix multiplication algorithms with reinforcement learning},
  author={Fawzi, Alhussein and Balog, Matej and Huang, Aja and Hubert, Thomas and Romera-Paredes, Bernardino and Barekatain, Mohammadamin and Novikov, Alexander and R. Ruiz, Francisco J and Schrittwieser, Julian and Swirszcz, Grzegorz and others},
  journal={Nature},
  volume={610},
  number={7930},
  pages={47--53},
  year={2022},
  publisher={Nature Publishing Group UK London}
}

@article{jin2018q,
  title={Is Q-learning provably efficient?},
  author={Jin, Chi and Allen-Zhu, Zeyuan and Bubeck, Sebastien and Jordan, Michael I},
  journal={Advances in neural information processing systems},
  volume={31},
  year={2018}
}

@inproceedings{jin2020provably,
  title={Provably efficient reinforcement learning with linear function approximation},
  author={Jin, Chi and Yang, Zhuoran and Wang, Zhaoran and Jordan, Michael I},
  booktitle={Conference on learning theory},
  pages={2137--2143},
  year={2020},
  organization={PMLR}
}

@inproceedings{kalashnikov2018scalable,
  title={Scalable deep reinforcement learning for vision-based robotic manipulation},
  author={Kalashnikov, Dmitry and Irpan, Alex and Pastor, Peter and Ibarz, Julian and Herzog, Alexander and Jang, Eric and Quillen, Deirdre and Holly, Ethan and Kalakrishnan, Mrinal and Vanhoucke, Vincent and others},
  booktitle={Conference on robot learning},
  pages={651--673},
  year={2018},
  organization={PMLR}
}

@article{van2020mdp,
  title={Mdp homomorphic networks: Group symmetries in reinforcement learning},
  author={Van der Pol, Elise and Worrall, Daniel and van Hoof, Herke and Oliehoek, Frans and Welling, Max},
  journal={Advances in Neural Information Processing Systems},
  volume={33},
  pages={4199--4210},
  year={2020}
}

@article{haasdonk2007invariant,
  title={Invariant kernel functions for pattern analysis and machine learning},
  author={Haasdonk, Bernard and Burkhardt, Hans},
  journal={Machine learning},
  volume={68},
  pages={35--61},
  year={2007},
  publisher={Springer}
}

@book{kondor2008group,
  title={Group theoretical methods in machine learning},
  author={Kondor, Imre Risi},
  year={2008},
  publisher={Columbia University}
}

@article{van2018learning,
  title={Learning invariances using the marginal likelihood},
  author={van der Wilk, Mark and Bauer, Matthias and John, ST and Hensman, James},
  journal={Advances in Neural Information Processing Systems},
  volume={31},
  year={2018}
}

@article{auer2008near,
  title={Near-optimal regret bounds for reinforcement learning},
  author={Auer, Peter and Jaksch, Thomas and Ortner, Ronald},
  journal={Advances in neural information processing systems},
  volume={21},
  year={2008}
}

@inproceedings{bartlett2012regal,
author = {Bartlett, Peter L. and Tewari, Ambuj},
title = {REGAL: a regularization based algorithm for reinforcement learning in weakly communicating MDPs},
year = {2009},
isbn = {9780974903958},
publisher = {AUAI Press},
booktitle = {Proceedings of the Twenty-Fifth Conference on Uncertainty in Artificial Intelligence},
pages = {35–42},
numpages = {8},
location = {Montreal, Quebec, Canada},
series = {UAI '09}
}

@inproceedings{yao2014pseudo,
  title={Pseudo-MDPs and factored linear action models},
  author={Yao, Hengshuai and Szepesv{\'a}ri, Csaba and Pires, Bernardo Avila and Zhang, Xinhua},
  booktitle={2014 IEEE Symposium on Adaptive Dynamic Programming and Reinforcement Learning (ADPRL)},
  pages={1--9},
  year={2014},
  organization={IEEE}
}

@article{russo2019worst,
  title={Worst-case regret bounds for exploration via randomized value functions},
  author={Russo, Daniel},
  journal={Advances in Neural Information Processing Systems},
  volume={32},
  year={2019}
}

@inproceedings{zanette2020frequentist,
  title={Frequentist regret bounds for randomized least-squares value iteration},
  author={Zanette, Andrea and Brandfonbrener, David and Brunskill, Emma and Pirotta, Matteo and Lazaric, Alessandro},
  booktitle={International Conference on Artificial Intelligence and Statistics},
  pages={1954--1964},
  year={2020},
  organization={PMLR}
}

@article{neu2020unifying,
  title={A unifying view of optimism in episodic reinforcement learning},
  author={Neu, Gergely and Pike-Burke, Ciara},
  journal={Advances in Neural Information Processing Systems},
  volume={33},
  pages={1392--1403},
  year={2020}
}

@inproceedings{yang2020reinforcement,
  title={Reinforcement learning in feature space: Matrix bandit, kernels, and regret bound},
  author={Yang, Lin and Wang, Mengdi},
  booktitle={International Conference on Machine Learning},
  pages={10746--10756},
  year={2020},
  organization={PMLR}
}

@inproceedings{chowdhury2019online,
  title={Online learning in kernelized markov decision processes},
  author={Chowdhury, Sayak Ray and Gopalan, Aditya},
  booktitle={The 22nd International Conference on Artificial Intelligence and Statistics},
  pages={3197--3205},
  year={2019},
  organization={PMLR}
}

@inproceedings{domingues2021kernel,
  title={Kernel-based reinforcement learning: A finite-time analysis},
  author={Domingues, Omar Darwiche and M{\'e}nard, Pierre and Pirotta, Matteo and Kaufmann, Emilie and Valko, Michal},
  booktitle={International Conference on Machine Learning},
  pages={2783--2792},
  year={2021},
  organization={PMLR}
}

@article{vakili2023kernelized,
  title={Kernelized reinforcement learning with order optimal regret bounds},
  author={Vakili, Sattar and Olkhovskaya, Julia},
  journal={Advances in Neural Information Processing Systems},
  volume={36},
  pages={4225--4247},
  year={2023}
}

@inproceedings{mei2021learning,
  title={Learning with invariances in random features and kernel models},
  author={Mei, Song and Misiakiewicz, Theodor and Montanari, Andrea},
  booktitle={Conference on Learning Theory},
  pages={3351--3418},
  year={2021},
  organization={PMLR}
}

@inproceedings{bietti2024sample,
  title={On the sample complexity of learning under invariance and geometric stability},
  author={Bietti, Alberto and Venturi, Luca and Bruna, Joan},
  booktitle={Proceedings of the 35th International Conference on Neural Information Processing Systems, NIPS’21},
  year={2024},
  organization={Curran Associates Inc. Red Hook, NY, USA}
}

@article{tahmasebi2023exact,
  title={The exact sample complexity gain from invariances for kernel regression},
  author={Tahmasebi, Behrooz and Jegelka, Stefanie},
  journal={Advances in Neural Information Processing Systems},
  volume={36},
  year={2023}
}

@inproceedings{chowdhury2017kernelized,
  title={On kernelized multi-armed bandits},
  author={Chowdhury, Sayak Ray and Gopalan, Aditya},
  booktitle={International Conference on Machine Learning},
  pages={844--853},
  year={2017},
  organization={PMLR}
}

@article{pedregosa2011scikit,
  title={Scikit-learn: Machine learning in Python},
  author={Pedregosa, Fabian and Varoquaux, Ga{\"e}l and Gramfort, Alexandre and Michel, Vincent and Thirion, Bertrand and Grisel, Olivier and Blondel, Mathieu and Prettenhofer, Peter and Weiss, Ron and Dubourg, Vincent and others},
  journal={the Journal of machine Learning research},
  volume={12},
  pages={2825--2830},
  year={2011},
  publisher={JMLR. org}
}

@article{balandat2020botorch,
  title={BoTorch: A framework for efficient Monte-Carlo Bayesian optimization},
  author={Balandat, Maximilian and Karrer, Brian and Jiang, Daniel and Daulton, Samuel and Letham, Ben and Wilson, Andrew G and Bakshy, Eytan},
  journal={Advances in neural information processing systems},
  volume={33},
  pages={21524--21538},
  year={2020}
}

@article{mcclellan2024boosting,
  title={Boosting sample efficiency and generalization in multi-agent reinforcement learning via equivariance},
  author={McClellan, Joshua and Haghani, Naveed and Winder, John and Huang, Furong and Tokekar, Pratap},
  journal={arXiv preprint arXiv:2410.02581},
  year={2024}
}

@inproceedings{yarats2021image,
  title={Image augmentation is all you need: Regularizing deep reinforcement learning from pixels},
  author={Yarats, Denis and Kostrikov, Ilya and Fergus, Rob},
  booktitle={International conference on learning representations},
  year={2021}
}

@article{laskin2020reinforcement,
  title={Reinforcement learning with augmented data},
  author={Laskin, Misha and Lee, Kimin and Stooke, Adam and Pinto, Lerrel and Abbeel, Pieter and Srinivas, Aravind},
  journal={Advances in neural information processing systems},
  volume={33},
  pages={19884--19895},
  year={2020}
}

@inproceedings{liu2020pic,
  title={PIC: permutation invariant critic for multi-agent deep reinforcement learning},
  author={Liu, Iou-Jen and Yeh, Raymond A and Schwing, Alexander G},
  booktitle={Conference on Robot Learning},
  pages={590--602},
  year={2020},
  organization={PMLR}
}

@article{zhu2022sample,
author = {Zhu, Xupeng and Wang, Dian and Su, Guanang and Biza, Ondrej and Walters, Robin and Platt, Robert},
title = {On robot grasp learning using equivariant models},
year = {2023},
issue_date = {Dec 2023},
publisher = {Kluwer Academic Publishers},
volume = {47},
number = {8},
issn = {0929-5593},
journal = {Auton. Robots},
pages = {1175–1193},
}

@inproceedings{
van2021multi,
title={Multi-Agent {MDP} Homomorphic Networks},
author={Elise van der Pol and Herke van Hoof and Frans A Oliehoek and Max Welling},
booktitle={International Conference on Learning Representations},
year={2022},
}

@inproceedings{
muglich2025expected,
title={Expected Return Symmetries},
author={Darius Muglich and Johannes Forkel and Elise van der Pol and Jakob Nicolaus Foerster},
booktitle={The Thirteenth International Conference on Learning Representations},
year={2025}
}

@inproceedings{
wang2022mathrm,
title={SO(2)-Equivariant Reinforcement Learning},
author={Dian Wang and Robin Walters and Robert Platt},
booktitle={International Conference on Learning Representations},
year={2022}
}

@inproceedings{mondal2022eqr,
  title={EqR: Equivariant representations for data-efficient reinforcement learning},
  author={Mondal, Arnab Kumar and Jain, Vineet and Siddiqi, Kaleem and Ravanbakhsh, Siamak},
  booktitle={International Conference on Machine Learning},
  pages={15908--15926},
  year={2022},
  organization={PMLR}
}

@inproceedings{weissenbacher2022koopman,
  title={Koopman q-learning: Offline reinforcement learning via symmetries of dynamics},
  author={Weissenbacher, Matthias and Sinha, Samarth and Garg, Animesh and Yoshinobu, Kawahara},
  booktitle={International conference on machine learning},
  pages={23645--23667},
  year={2022},
  organization={PMLR}
}

@inproceedings{nguyen2023equivariant,
  title={Equivariant reinforcement learning under partial observability},
  author={Nguyen, Hai Huu and Baisero, Andrea and Klee, David and Wang, Dian and Platt, Robert and Amato, Christopher},
  booktitle={Conference on Robot Learning},
  pages={3309--3320},
  year={2023},
  organization={PMLR}
}

@inproceedings{yu2024leveraging,
  title={Leveraging partial symmetry for multi-agent reinforcement learning},
  author={Yu, Xin and Shi, Rongye and Feng, Pu and Tian, Yongkai and Li, Simin and Liao, Shuhao and Wu, Wenjun},
  booktitle={Proceedings of the AAAI Conference on Artificial Intelligence},
  volume={38},
  pages={17583--17590},
  year={2024}
}

@book{puterman2014markov,
  title={Markov decision processes: discrete stochastic dynamic programming},
  author={Puterman, Martin L},
  year={2014},
  publisher={John Wiley \& Sons}
}

@misc{ravindran2001symmetries,
  title={Symmetries and model minimization in markov decision processes},
  author={Ravindran, Balaraman and Barto, Andrew G},
  year={2001},
  publisher={University of Massachusetts}
}

@inproceedings{cobbe2019quantifying,
  title={Quantifying generalization in reinforcement learning},
  author={Cobbe, Karl and Klimov, Oleg and Hesse, Chris and Kim, Taehoon and Schulman, John},
  booktitle={International conference on machine learning},
  pages={1282--1289},
  year={2019},
  organization={PMLR}
}

@inproceedings{lee2019network,
  title={Network Randomization: A Simple Technique for Generalization in Deep Reinforcement Learning},
  author={Lee, Kimin and Lee, Kibok and Shin, Jinwoo and Lee, Honglak},
  booktitle={International Conference on Learning Representations},
  year={2020}
}

@inproceedings{sinha2022s4rl,
  title={S4rl: Surprisingly simple self-supervision for offline reinforcement learning in robotics},
  author={Sinha, Samarth and Mandlekar, Ajay and Garg, Animesh},
  booktitle={Conference on Robot Learning},
  pages={907--917},
  year={2022},
  organization={PMLR}
}

@inproceedings{
simm2020symmetry,
title={Symmetry-Aware Actor-Critic for 3D Molecular Design},
author={Gregor N. C. Simm and Robert Pinsler and G{\'a}bor Cs{\'a}nyi and Jos{\'e} Miguel Hern{\'a}ndez-Lobato},
booktitle={International Conference on Learning Representations},
year={2021}
}

@inproceedings{jiang2018graph,
  title={Graph Convolutional Reinforcement Learning},
  author={Jiang, Jiechuan and Dun, Chen and Huang, Tiejun and Lu, Zongqing},
  booktitle={International Conference on Learning Representations},
  year={2020}
}

@article{sukhbaatar2016learning,
  title={Learning multiagent communication with backpropagation},
  author={Sukhbaatar, Sainbayar and Fergus, Rob and others},
  journal={Advances in neural information processing systems},
  volume={29},
  year={2016}
}

@article{muglich2022equivariant,
  title={Equivariant networks for zero-shot coordination},
  author={Muglich, Darius and Schroeder de Witt, Christian and van der Pol, Elise and Whiteson, Shimon and Foerster, Jakob},
  journal={Advances in Neural Information Processing Systems},
  volume={35},
  pages={6410--6423},
  year={2022}
}

@article{yang2020function,
  title={On function approximation in reinforcement learning: Optimism in the face of large state spaces},
  author={Yang, Zhuoran and Jin, Chi and Wang, Zhaoran and Wang, Mengdi and Jordan, Michael I},
  journal={arXiv preprint arXiv:2011.04622},
  year={2020}
}

@inproceedings{chang2022flexible,
  title={Flexible chip placement via reinforcement learning: late breaking results},
  author={Chang, Fu-Chieh and Tseng, Yu-Wei and Yu, Ya-Wen and Lee, Ssu-Rui and Cioba, Alexandru and Tseng, I-Lun and Shiu, Da-shan and Hsu, Jhih-Wei and Wang, Cheng-Yuan and Yang, Chien-Yi and others},
  booktitle={Proceedings of the 59th ACM/IEEE Design Automation Conference},
  pages={1392--1393},
  year={2022}
}

@inproceedings{eriksson2021high,
  title={High-dimensional Bayesian optimization with sparse axis-aligned subspaces},
  author={Eriksson, David and Jankowiak, Martin},
  booktitle={Uncertainty in Artificial Intelligence},
  pages={493--503},
  year={2021},
  organization={PMLR}
}

@misc{wang2024improvedregretboundsbayesian,
      title={On Improved Regret Bounds In Bayesian Optimization with Gaussian Noise}, 
      author={Jingyi Wang and Haowei Wang and Cosmin G. Petra and Nai-Yuan Chiang},
      year={2024},
      eprint={2412.18789},
      archivePrefix={arXiv},
      primaryClass={cs.LG},
      url={https://arxiv.org/abs/2412.18789}, 
}

@incollection{tian2024exploiting,
  title={Exploiting hierarchical symmetry in multi-agent reinforcement learning},
  author={Tian, Yongkai and Yu, Xin and Qi, Yirong and Wang, Li and Feng, Pu and Wu, Wenjun and Shi, Rongye and Luo, Jie},
  booktitle={ECAI 2024},
  pages={2202--2209},
  year={2024},
  publisher={IOS Press}
}

@article{yu2023esp,
  title={Esp: Exploiting symmetry prior for multi-agent reinforcement learning},
  author={Yu, Xin and Shi, Rongye and Feng, Pu and Tian, Yongkai and Luo, Jie and Wu, Wenjun},
  journal={arXiv preprint arXiv:2307.16186},
  year={2023}
}

@inproceedings{chen2024rm,
  title = {{E}(3)-Equivariant Actor-Critic Methods for Cooperative Multi-Agent Reinforcement Learning},
  author = {Chen, Dingyang and Zhang, Qi},
  booktitle = {International Conference on Machine Learning},
  pages = {7640--7666},
  year = {2024},
  organization = {PMLR}
}

@article{shi2025symmetry,
  title={Symmetry-Informed MARL: A Decentralized and Cooperative UAV Swarm Control Approach for Communication Coverage},
  author={Shi, Rongye and Yu, Xin and Wang, Yandong and Tian, Yongkai and Liu, Zhenyu and Wu, Wenjun and Zhang, Xiao-Ping and Veloso, Manuela M},
  journal={IEEE Transactions on Mobile Computing},
  year={2025},
  publisher={IEEE}
}

@inproceedings{ravindran2002model,
  title={Model minimization in hierarchical reinforcement learning},
  author={Ravindran, Balaraman and Barto, Andrew G},
  booktitle={International symposium on abstraction, reformulation, and approximation},
  pages={196--211},
  year={2002},
  organization={Springer}
}

@inproceedings{ravindran2003smdp,
author = {Ravindran, Balaraman and Barto, Andrew G.},
title = {SMDP homomorphisms: an algebraic approach to abstraction in semi-Markov decision processes},
year = {2003},
publisher = {Morgan Kaufmann Publishers Inc.},
booktitle = {Proceedings of the 18th International Joint Conference on Artificial Intelligence},
pages = {1011–1016},
numpages = {6},
series = {IJCAI'03}
}

@article{li2006towards,
  title={Towards a unified theory of state abstraction for MDPs.},
  author={Li, Lihong and Walsh, Thomas J and Littman, Michael L},
  journal={AI\&M},
  volume={1},
  number={2},
  pages={3},
  year={2006}
}

@book{ravindran2004algebraic,
  title={An algebraic approach to abstraction in reinforcement learning},
  author={Ravindran, Balaraman},
  year={2004},
  publisher={University of Massachusetts Amherst}
}

@article{givan2003equivalence,
  title={Equivalence notions and model minimization in Markov decision processes},
  author={Givan, Robert and Dean, Thomas and Greig, Matthew},
  journal={Artificial intelligence},
  volume={147},
  number={1-2},
  pages={163--223},
  year={2003},
  publisher={Elsevier}
}

@mastersthesis{taylor2008lax,
  author = {Taylor, Jonathan},
  title  = {Lax probabilistic bisimulation},
  school = {McGill University},
  year   = {2008}
}

@article{zhou2020meta,
  title={Meta-learning symmetries by reparameterization},
  author={Zhou, Allan and Knowles, Tom and Finn, Chelsea},
  journal={arXiv preprint arXiv:2007.02933},
  year={2020}
}
\section*{Checklist}


\begin{enumerate}

  \item For all models and algorithms presented, check if you include:
  \begin{enumerate}
    \item A clear description of the mathematical setting, assumptions, algorithm, and/or model. [Yes, the mathematical setting is introduced in Section~\ref{sec:prelim}, which covers Episodic Markov Decision Processes (Section~\ref{sec:episodeMDP}), Symmetric RL (Section~\ref{sec:symmetric_RL}), and Kernel-Based RL with Symmetries (Section~\ref{sec:kernel_RL_with_sym}). In the same section, we also state our technical assumptions, namely Assumptions~\ref{ass:invRKHS_norm} and~\ref{ass:eigendecay}. The LSVI algorithm is then described in Section~\ref{sec:lsvi}.  ]
    \item An analysis of the properties and complexity (time, space, sample size) of any algorithm. [Yes, a theoretical analysis of LSVI with a totally invariant kernel, leading to novel bounds on sample complexity, is provided in Section~\ref{sec:algorithm}. In addition, we discuss the time complexity of our approach in the limitations section (Section~\ref{sec:lim}).]
    \item (Optional) Anonymized source code, with specification of all dependencies, including external libraries. [Yes, the code used for the experiments in the synthetic setting and on the Frozen Lake environment is publicly available at:
\url{https://github.com/mtkresearch/IQL}. For the experiments on the 2D placement problem, we provide a clear and detailed description of the setup in Section~\ref{sec:experiments}, but we do not share the corresponding code, as it is confidential.]
  \end{enumerate}

  \item For any theoretical claim, check if you include:
  \begin{enumerate}
    \item Statements of the full set of assumptions of all theoretical results. [Yes, we present in Section~\ref{sec:prelim} our technical assumptions, namely Assumptions~\ref{ass:invRKHS_norm} and~\ref{ass:eigendecay}. ]
    \item Complete proofs of all theoretical results. [Yes, formal proofs for all theorems and propositions stated in the main text are provided in the appendix. The proof of Proposition~\ref{prop:equivinv} can be found in Appendix~\ref{appendix:symrl}, while Proposition~\ref{prop:quotient_MDP} is stated and proved in Appendix~\ref{prop:quotient_MDP}. Theorem~\ref{the:hoeffding} is stated and proved in Appendix~\ref{appx:tabular_Q}, and Theorem~\ref{the:cov_num} is proved in Appendix~\ref{app:proof_cov}. Where our work builds upon lemmas or theorems from prior literature, the original sources are clearly cited, and their relevance to our setting is explained (e.g., Theorem~\ref{the:regB}).]
    \item Clear explanations of any assumptions. [Yes, our main assumptions are clearly explained in Section~\ref{sec:kernel_model_RL}.]     
  \end{enumerate}

  \item For all figures and tables that present empirical results, check if you include:
  \begin{enumerate}
    \item The code, data, and instructions needed to reproduce the main experimental results (either in the supplemental material or as a URL). [Yes, the code used for the experiments in the synthetic setting and on the Frozen Lake environment is publicly available at \url{https://github.com/mtkresearch/IQL}. However, since the
2D placement experiment relies on proprietary code and models, the corresponding
implementation cannot be disclosed or made
publicly available.] 
    \item All the training details (e.g., data splits, hyperparameters, how they were chosen). [Yes, for comprehensive details—such as hyperparameter tuning, visualizations of the synthetic reward function, and computational resources used—please refer to Appendix~\ref{app:exp}. ]
    \item A clear definition of the specific measure or statistics and error bars (e.g., with respect to the random seed after running experiments multiple times). [Yes, in all our experiments, we report relevant statistical information where applicable. This includes the sample sizes for experimental runs and error bars representing the standard error.]
    \item A description of the computing infrastructure used. (e.g., type of GPUs, internal cluster, or cloud provider). [Yes,  we make special mention of the type of compute and memory we use in Appendix~\ref{app:exp}.]
  \end{enumerate}

  \item If you are using existing assets (e.g., code, data, models) or curating/releasing new assets, check if you include:
  \begin{enumerate}
    \item Citations of the creator If your work uses existing assets. [Yes, we have cited the Scikit-Learn and BoTorch libraries used in our experiments (see Appendix~\ref{app:exp}). ]
    \item The license information of the assets, if applicable. [Not Applicable]
    \item New assets either in the supplemental material or as a URL, if applicable. [Yes, the code generating our experimental results is publicly available at \url{https://github.com/mtkresearch/IQL}.]
    \item Information about consent from data providers/curators. [Not Applicable]
    \item Discussion of sensible content if applicable, e.g., personally identifiable information or offensive content. [Not Applicable]
  \end{enumerate}

  \item If you used crowdsourcing or conducted research with human subjects, check if you include:
  \begin{enumerate}
    \item The full text of instructions given to participants and screenshots. [Not Applicable]
    \item Descriptions of potential participant risks, with links to Institutional Review Board (IRB) approvals if applicable. [Not Applicable]
    \item The estimated hourly wage paid to participants and the total amount spent on participant compensation. [Not Applicable]
  \end{enumerate}

\end{enumerate}

\clearpage
\appendix
\thispagestyle{empty}

\onecolumn

\section{Group Action}
\label{appx:grp_action}


 We recall the standard definition. Let $(G, \cdot)$ be a group acting on a space $\X$ via a binary operation $\circ: G \times \X \to \X$ such that:
\begin{itemize}
    \item $g \circ (h \circ x) = (g \cdot h) \circ x$ for all $g,h \in G$, $x \in \X$
    \item $e \circ x = x$ for identity element $e \in G$
\end{itemize}
\section{Proof of Proposition~\ref{prop:equivinv}}
\label{appendix:symrl}
\begin{proof}
    Let $H$ be the horizon of the MDP. We will superscript states by $t = 0,... H$ to indicate the time-step it is observed in a particular trajectory.
    We'll show this by induction, starting backwards from a terminal state $s^H$, for which, by definition $Q_\pi(s^H,a) = V_\pi(s^H)=0$.
    Now for an arbitrary state $s^t$, we have:
    \begin{align}
        Q_\pi(gs^t,ga) &= \sum_{s^{t+1}\in\mathcal S} \mathbb P(s^{t+1}|gs^t, ga)[r(gs^t,ga,s^{t+1}) + V_\pi(s^{t+1})]\\
        &= \sum_{s^{t+1}\in\mathcal S} \mathbb P(g^{-1}s^{t+1}|s^t, a)[r(s^t,a,g^{-1}s^{t+1}) + V_\pi(s^{t+1})]\\
        &= \sum_{s^{t+1}\in\mathcal S} \mathbb P(s^{t+1}|s^t, a)[r(s^t,a,s^{t+1}) + V_\pi(gs^{t+1})]
    \end{align}
    The value function itself satisfies:
    \begin{align}
        V_\pi(gs^{t+1}) &= \sum_{a \in \mathcal A}\pi(a|gs^{t+1}) Q_\pi(gs^{t+1},a) = \\
        &= \sum_{a \in \mathcal A}\pi(g^{-1}a|s^{t+1}) Q_\pi(gs^{t+1},a)\\
        &=\sum_{a \in \mathcal A}\pi(a|s^{t+1}) Q_\pi(gs^{t+1},ga)\\
        &= \sum_{a \in \mathcal A}\pi(a|s^{t+1}) Q_\pi(s^{t+1},a) = V_\pi(s^{t+1}),
    \end{align}
where the final two equalities follow from the induction hypothesis. 

The final element is showing greedy policies of invariant $Q$-functions are equivariant.
\begin{align}
    \pi_q(gs) = \argmax_{a} q(gs,a) = g^{-1}\argmax_{a} q(gs, ga) = g^{-1}\argmax_{a} q(s, a) = g^{-1}\pi_q(s)
\end{align}

\end{proof}
\section{Proposition~\ref{prop:quotient_MDP}}
\label{appx:prop:quotient_MDP}

Denote the orbit of an element $x$ under the action of $G$ by the set $Orb(x) = \{g(x)| g\in G\}$. Under a couple of simplifying assumptions, the dynamics of the MDP reduces to that of a simpler MDP with strictly smaller state space.

\begin{prop}\label{prop:quotient_MDP}
Assume $\mathcal M$ is a $G$-invariant MDP. If the action on $\mathcal A$ is trivial, i.e. 
\begin{align}\label{eq:trivial-action-action}
    g \circ_\Ac a = a, \forall g \in G,
\end{align} then the following quantity
\begin{equation}\label{eq:orb-dynamics}
    \mathbb P(Orb(s')|Orb(s),a):= \mathbb P(Orb(s')|s,a)
\end{equation}
is well-defined and $\mathcal M/G = (\mathcal S/G, \Ac, H, \mathbb P, r_G)$ given by
$\mathcal S/G = \{Orb(s)|s\in \mathcal S\}$, dynamics $\mathbb P$ given by \ref{eq:orb-dynamics}, and reward given by $r_G(Orb(s),a) = r(s,a)$ is a well-defined MDP and the optimal policy $\pi^*_{\mathcal M/G}$ extends to an optimal policy for $\mathcal M$, by 
\begin{align}\label{eq:invariant-extension}
    \pi^*_{\mathcal M}(s) = \pi^*_{\mathcal M/G}(Orb(s)).
\end{align}

\end{prop}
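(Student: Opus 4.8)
The plan is to verify the three assertions in order: well-definedness of the orbit transition law, that $\mathcal M/G$ is a genuine MDP, and optimality of the lifted policy. Throughout I abbreviate $g\circ_{\mathcal S} s$ by $gs$ and use the two invariance identities from Appendix~\ref{appx:grp_action}, which under the hypothesis $g\circ_{\mathcal A} a = a$ specialize to $\mathbb P(s'|gs,a) = \mathbb P(g^{-1}s'|s,a)$ and $r(gs,a) = r(s,a)$. I read $\mathbb P(Orb(s')|s,a)$ as the aggregated mass $\sum_{s''\in Orb(s')}\mathbb P(s''|s,a)$, i.e.\ the pushforward of $\mathbb P(\cdot|s,a)$ along the quotient map $s\mapsto Orb(s)$.

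\textbf{Well-definedness.} Independence of the choice of representative $s'$ is immediate, since the definition already sums over the entire orbit. For independence of the representative $s$, I replace $s$ by $gs$ and compute $\mathbb P(Orb(s')|gs,a) = \sum_{s''\in Orb(s')}\mathbb P(g^{-1}s''|s,a)$; as $s''$ runs over the $G$-stable set $Orb(s')$, so does $g^{-1}s''$, and the sum is unchanged. Hence \eqref{eq:orb-dynamics} is well-defined. That $\mathbb P(\cdot|Orb(s),a)$ is a probability law on $\mathcal S/G$ follows because the orbits partition $\mathcal S$, so summing the orbit masses recovers $\sum_{s''\in\mathcal S}\mathbb P(s''|s,a)=1$. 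Finally $r_G(Orb(s),a)=r(s,a)$ is representative-independent by the reward identity. This establishes that $\mathcal M/G$ is a well-defined MDP.

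\textbf{Value correspondence.} The crux is the claim that the optimal value functions agree under the quotient map, $V^*_{\mathcal M,h}(s) = V^*_{\mathcal M/G,h}(Orb(s))$ for every $h$, which I would prove by backward induction on $h$. The base case $h=H+1$ holds as both sides vanish. For the inductive step, the hypothesis $V^*_{\mathcal M,h+1}(s)=V^*_{\mathcal M/G,h+1}(Orb(s))$ in particular forces $V^*_{\mathcal M,h+1}$ to be constant on orbits, i.e.\ $G$-invariant. This invariance lets me regroup the Bellman backup by orbits:
\begin{equation*}
\sum_{s'}\mathbb P(s'|s,a)\,V^*_{\mathcal M,h+1}(s') = \sum_{Orb(s')}\mathbb P(Orb(s')|s,a)\,V^*_{\mathcal M/G,h+1}(Orb(s')),
\end{equation*}
where the value factors out of each orbit and the inner mass is then the pushforward law. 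Combining with $r(s,a)=r_G(Orb(s),a)$ yields $Q^*_{\mathcal M,h}(s,a)=Q^*_{\mathcal M/G,h}(Orb(s),a)$, and taking $\max_a$ closes the induction.

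\textbf{Conclusion and main obstacle.} Since the state-action values coincide as functions of $a$, the greedy sets $\argmax_a Q^*_{\mathcal M,h}(s,a)$ and $\argmax_a Q^*_{\mathcal M/G,h}(Orb(s),a)$ are identical, so the lift $\pi^*_{\mathcal M}(s)=\pi^*_{\mathcal M/G}(Orb(s))$ is greedy with respect to $Q^*_{\mathcal M,h}$ and hence optimal for $\mathcal M$; the trivial $\mathcal A$-action is essential here, as it lets one action label serve in both MDPs. I expect the regrouping step to be the main obstacle: it is legitimate only because the induction simultaneously delivers the $G$-invariance of $V^*_{h+1}$, so that aggregating $\mathbb P$ over orbits is consistent with aggregating the value. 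Some additional care is needed if $\mathcal S$ is continuous, where the orbit masses must be read as a pushforward measure rather than a finite sum.
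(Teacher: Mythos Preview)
Your proof is correct, and the well-definedness portion matches the paper's argument essentially line for line. For the optimality of the lifted policy, however, you take a different route. The paper invokes Proposition~\ref{prop:equivinv} to conclude that $V^*_{\mathcal M}$ is $G$-invariant and hence descends to $\mathcal S/G$, then runs a two-sided sandwich: the descent of $V^*_{\mathcal M}$ is the value in $\mathcal M/G$ of the descended optimal policy, so it is dominated by $V^{\pi^*_{\mathcal M/G}}$; conversely the lift of $V^{\pi^*_{\mathcal M/G}}$ is the value in $\mathcal M$ of the lifted policy, so it is dominated by $V^*_{\mathcal M}$. You instead establish $V^*_{\mathcal M,h}(s) = V^*_{\mathcal M/G,h}(Orb(s))$ directly by backward induction on the Bellman optimality equation, obtaining the $G$-invariance of $V^*_{h+1}$ as a byproduct of the induction hypothesis rather than importing it from Proposition~\ref{prop:equivinv}. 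Your approach is more self-contained and makes the orbit-regrouping identity explicit, which is precisely the step the paper's sandwich argument asserts without spelling out (that a policy's value in one MDP coincides with the value of the corresponding policy in the other); the paper's version is shorter once Proposition~\ref{prop:equivinv} is in hand.
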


\begin{proof}
    Equation \ref{eq:orb-dynamics} effectively tells us that the probability of transitioning from one state to another, can be aggregated across orbits in such a way that the corresponding actions are unchanged.
    Namely given $s_1 = g(s_2)$:
    \begin{align*}
        \mathbb P(Orb(s')|Orb(s_1),a) = \mathbb P(Orb(s')|s_1,a) &= \mathbb P(Orb(s')|g(s_2),a) = \\ = \mathbb P(g^{-1}(Orb(s'))|s_2,a) &= \mathbb P(Orb(s')|s_2,a) = \mathbb P(Orb(s')|Orb(s_2),a)
    \end{align*}
    Note that we have used Equation \ref{eq:trivial-action-action} here. Now let $\pi^*_{\mathcal M/G}$ be an optimal policy for ${\mathcal M/G}$, and $V^*_{\mathcal M}$ be the optimal value function for $\mathcal M$. Due to Proposition \ref{prop:equivinv}, $V^*_{\mathcal M}$ is invariant, so $V:=V(Orb(s)):= V^*_{\mathcal M}(s)$ is well defined. Moreover, by construction, $V$ is a value function corresponding to the policy $\pi^*_{\mathcal M}(Orb(s))$. Now by optimality of $\pi^*_{\mathcal M/G}$ for all $s$ we have $V(Orb(s)) \leq V^{\pi^*_{\mathcal M/G}}(Orb(s))$. Conversely, $V^{\pi^*_{\mathcal M/G}}$ defines a valid value function on $\mathcal M$ by $V^{\pi^*_{\mathcal M/G}}(s) =V^{\pi^*_{\mathcal M/G}}(Orb(s))$, and hence, by optimality, $V^{\pi^*_{\mathcal M/G}}(s)\leq V^*_{\mathcal M}(s)$ for all $s$. This shows that the two value functions are equivalent, that $\pi^*_{\mathcal M/G}(s):= \pi^*_{\mathcal M/G}(Orb(s))$ is optimal for $\mathcal M$.
\end{proof}

\section{Tabular Q-learning with symmetry}
\label{appx:tabular_Q}
In recent years, there has been a family of research outputs which study the complexity of Q-learning algorithms in the tabular setting. Typical results such as \cite{jin2018q}, give complexity bounds in the tabular setting in terms of the size of the state-action space, $|\Sc \times \Ac| = SA$. 
In our setting, even though we cannot define a simpler dynamics on $\Sc/G \times \Ac/G$, we will see that the regret bounds given in terms of $SA$ extend to regret bounds in terms of $\tilde{SA} = |(\Sc \times \Ac)/G|$.

The gains in sample complexity in general will come from how new state action pairs are observed from unrolling equivariant policies. 
We will denote a trajectory under $\pi$ by $\mathcal T$ to be a sequence $(s_i,a_i)|_{i=1}^H \in \mathcal Z$ with the property that $a_i \sim \pi(s_i)$ and $s_{i+1} \sim \mathbb P(s_i, a_i)$, The corresponding action of $G$ on the space of $\mathcal T$'s is given by $g(\mathcal T) = \left(g(s_i),g(a_i)\right)|_{i=1}^H$. We then have that if $\pi$ is equivariant, the following holds:
\begin{enumerate}\label{enum:inv-traj}
    \item The returns of $\mathcal T$ and $g\mathcal T$ are the same.
    \item $\mathbb P_\pi(\mathcal T) = \mathbb P_\pi(g\mathcal T) $.
\end{enumerate}

We can use these properties to extend algorithms such as Algorithm \ref{alg:hoeffding-sym}.

\begin{algorithm}
\caption{Q-learning with UCB-Hoeffding \textcolor{black}{\& symmetric experience augmentation.}}
\label{alg:hoeffding-sym}
\begin{algorithmic}[1]
\State Initialize $Q_h(s, a) \leftarrow H$ and $N_h(s, a) \leftarrow 0$ for all $(s, a, h) \in S \times A \times [H]$.
\For{episode $k = 1, \ldots, K$}
    \State Receive $s_1$.
    \For{step $h = 1, \ldots, H$}
        \State Take action $a_h \leftarrow \arg\max_{a'} Q_h(s_h, a')$, and observe $s_{h+1}$.
        \textcolor{black}{\For{$(s,a) \in G(s_h, a_h)$}
            \State $t = N_h(s, a) \leftarrow N_h(s_h, a_h) + 1$; $b_t \leftarrow c \sqrt{\frac{H^3 \iota}{t}}$.
            \State $Q_h(s, a) \leftarrow (1 - \alpha_t)Q_h(s_h, a_h) + \alpha_t \left[r_h(s_h, a_h) + V_{h+1}(s_{h+1}) + b_t \right]$.
            \State $V_h(s) \leftarrow \min\{H, \max_{a' \in A} Q_h(s_h, a')\}$.
        \EndFor}
    \EndFor
\EndFor
\end{algorithmic}
\end{algorithm}

The regret bounds directly translate to:

\begin{theorem}\label{the:hoeffding}
There exists an absolute constant $c > 0$ such that, for any $p \in (0, 1)$, if we choose $b_t = c \sqrt{\frac{H^3 \iota}{t}}$, then with probability $1 - p$, the total regret of Q-learning with UCB-Hoeffding and symmetric experience augmentation (see Algorithm \ref{alg:hoeffding-sym}) is at most $O(\sqrt{H^4 \tilde{SA}T \iota})$, where $\iota := \log\left(\frac{SAT}{p}\right)$.
\end{theorem}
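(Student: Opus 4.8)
The plan is to follow the analysis of \cite{jin2018q} almost verbatim, isolating the single place where the cardinality $SA$ enters and showing that symmetric experience augmentation replaces it by the orbit count $\widetilde{SA} = |(\Sc\times\Ac)/G|$. The starting point is an invariance invariant for Algorithm~\ref{alg:hoeffding-sym}: I would show by induction over the updates that $Q_h$, the counts $N_h$, and $V_h$ are constant on $G$-orbits, i.e. $Q_h(g s, g a) = Q_h(s,a)$ for all $g$. The base case holds since the initializations $Q_h \equiv H$, $N_h \equiv 0$ are invariant. For the inductive step, when a physical transition $(s_h,a_h)\to s_{h+1}$ is observed the inner loop applies the identical update simultaneously to every member of the orbit $G(s_h,a_h)$; reward invariance gives $r_h(g s_h, g a_h) = r_h(s_h,a_h)$, and the invariance of $V_{h+1}$ (carried by the hypothesis, using $\max_{a'} Q_h(gs,a') = \max_{a'} Q_h(gs, ga') = \max_{a'} Q_h(s,a')$ after the relabeling $a'\mapsto g a'$) guarantees $V_{h+1}(g s_{h+1}) = V_{h+1}(s_{h+1})$, so the single observed target $V_{h+1}(s_{h+1})$ is the correct target for \emph{every} orbit member. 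Hence orbit members stay perfectly synchronized, and each per-orbit count coincides with the number of physical visits to that orbit.

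Next I would verify that the augmented updates are statistically legitimate, which is the crux of the argument. Fix a step $h$ and an orbit $O$, physically visited at episodes $k_1 < k_2 < \cdots$ with observed next states $s_{h+1}^{k_i}$. By the transition relation $\mathbb P(g s' \mid g s, g a) = \mathbb P(s'\mid s,a)$ and the invariance of the optimal value function $V_{h+1}^\star$ (Proposition~\ref{prop:equivinv}), the sequence $\xi_i := V_{h+1}^\star(s_{h+1}^{k_i}) - (P_h V_{h+1}^\star)(s_h^{k_i}, a_h^{k_i})$ is a martingale-difference sequence adapted to the algorithm's filtration, bounded by $H$. The key observation is that this is literally the same sequence for every member of $O$: it depends only on the physical trajectory, not on which orbit representative we track, since $V_{h+1}^\star(g s_{h+1}^{k_i}) = V_{h+1}^\star(s_{h+1}^{k_i})$ and $(P_h V_{h+1}^\star)(g\cdot) = (P_h V_{h+1}^\star)(\cdot)$. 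Consequently no independence among the $|G|$ virtual copies generated from one physical sample is ever required; the copies share one martingale. An Azuma--Hoeffding bound on the weighted sums $\sum_i \alpha_t^i \xi_i$ together with a union bound (over at most $SAHT$ events, which affects $\iota$ only logarithmically) then yields, exactly as in \cite{jin2018q}, the per-orbit concentration of order $\sqrt{H^3\iota/t}$ and the optimism property $Q_h^k \ge Q_h^\star$ with probability $1-p$.

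With these two ingredients the remainder is the standard telescoping argument. I would import the recursion expressing $(Q_h^k - Q_h^\star)(s_h^k,a_h^k)$ through the weighted learning-rate coefficients $\alpha_t^i$, sum the regret decomposition over $h$ and $k$, and reach the familiar bottleneck term $\sum_{h}\sum_{k} 1/\sqrt{N_h^k(s_h^k,a_h^k)}$, the only place $SA$ appears in the original proof. Here $N_h^k$ is the physical-visit count of the visited orbit, and there are only $\widetilde{SA}$ orbits, so reindexing by orbits and applying Cauchy--Schwarz gives
\begin{equation*}
\sum_{k} \frac{1}{\sqrt{N_h^k}} = \sum_{O} \sum_{m=1}^{M_h^K(O)} \frac{1}{\sqrt{m}} \le \sum_{O} 2\sqrt{M_h^K(O)} \le 2\sqrt{\widetilde{SA}\,\textstyle\sum_O M_h^K(O)} = 2\sqrt{\widetilde{SA}\,K},
\end{equation*}
where $M_h^K(O)$ is the number of physical visits to orbit $O$ at step $h$ and $\sum_O M_h^K(O) = K$, the number of episodes. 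Substituting $\widetilde{SA}$ for $SA$ throughout the collection of terms delivers the claimed regret $\gO(\sqrt{H^4\,\widetilde{SA}\,T\,\iota})$.

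I expect the main obstacle to be the second step: rigorously justifying that feeding the single observed next-state value into all $|G|$ orbit updates is unbiased and does not corrupt the martingale structure, since the augmented samples are deterministically coupled rather than independent. The resolution above — that all orbit members literally share one martingale driven by the physical trajectory, so the gain is purely combinatorial and surfaces only in the pigeonhole count — is what makes the $SA \to \widetilde{SA}$ replacement clean, but it must be argued hand in hand with the maintained invariance of $V_{h+1}$ and of $V_{h+1}^\star$.
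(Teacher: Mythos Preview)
Your proposal is correct and takes essentially the same approach as the paper: both arguments establish that $Q_h$, $N_h$, $V_h$ remain constant on $G$-orbits so that the per-$(s,a)$ lemmas of \cite{jin2018q} carry over verbatim, and then replace $SA$ by $\widetilde{SA}$ in the pigeonhole step $\sum_k 1/\sqrt{N_h^k} \le 2\sqrt{\widetilde{SA}\,K}$ by reindexing over orbits. You are in fact more explicit than the paper about why the $|G|$ augmented copies do not corrupt the martingale structure (they literally share one martingale driven by the physical trajectory), a point the paper leaves implicit by simply asserting that $\delta_h^k$, $\phi_h^k$, and $n_h^k$ are independent of $g$.
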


\begin{proof}[Proof of Theorem~\ref{the:hoeffding}]

\emph{Proof:} The update to the Q-function is now:
\begin{equation}
Q_{h}^{k+1}(s, a) = 
\begin{cases} 
(1 - \alpha_t)Q_{h}^{k}(s, a) + \alpha_t \left[r_h(s, a) + V_{h+1}^{k}(s_{h+1}^{k}) + b_t \right] & \text{if } (s, a) \in G(s_{h}^{k}, a_{h}^{k}) \\
Q_{h}^{k}(s, a) & \text{otherwise}
\end{cases}
\end{equation}
so the Q-function remains constant on orbits of $G$.
Because of this formulation, $Q_{h}^{k}(s, a)$ is updated once per episode - horizon pair, the same as in \cite{jin2018q}. This means that all results and bounds that apply to a single $(s,a)$ pair follow through. In particular this applies to Lemmas 4.2 and 4.3 from \cite{jin2018q}.

Now, turning to their proof for the result, the quantities:
\begin{equation}
\delta_{h}^{k} := (V_{h}^{k} - V_{h}^{\pi^k})(gs_{h}^{k})
\end{equation}

\begin{equation}
\phi_{h}^{k} := (V_{h}^{k} - V_{h}^{*})(gs_{h}^{k})\text{ and}
\end{equation}

\begin{equation}
t = n_h^k = N_{h}^{k}(gs_{h}^{k}, ga_{h}^{k})
\end{equation}
are still well defined and independent of $g\in G$.

We need to bound $\sum_{k=1}^K \delta_h^k$.

From:
\begin{equation}
\delta_h^k \leq \alpha^{0}_t H + \sum_{i=1}^{t} \alpha^{i}_t \phi_{h+1}^{k_i} + \beta_t - \phi_{h+1}^{k} + \delta_{h+1}^{k} + \xi_{h+1}^{k}
\end{equation}

We bound the summation of the first term:
\begin{equation}
\sum_{k=1}^{K} \alpha^{0}_{n_{h}^{k}} H = H \cdot \sum_{k=1}^{K}  \mathbb{I}[n_{h}^{k} = 0] \leq \tilde{SA}H
\end{equation}
where the final inequality is due to the fact that  $Q_h(s, a)$ gets updated on entire orbits of $G$ on $\Sc \times \Ac$, so either $N_{h}^{k}(s,a) > 0$ for all $(s,a) \in G(s_h, a_h)$, or $N_{h}^{k}(s,a) = 0$ for all $(s,a) \in G(s_h, a_h)$ in which case $Q_h(s, a) = H$ and so $\arg\max_{a'} Q_h(s_h, a') = H$. By \ref{enum:inv-traj}, in the first case, we know that $N_{h-1}^{k}(s,a) > 0$ for all $(s,a) \in G(s_{h-1}, a_{h-1})$. Unrolling the trajectory backwards, we get $N_{1}^{k}(s_1,a) \geq 1$, which can only happen if all $a\in \Ac$ had been taken in $s_1$ prior to episode $k$. 

Now, again, following the argument in \cite{jin2018q}, we get:
\begin{equation}
\sum_{k=1}^{K} \delta_{1}^{k} \leq O \left( H^2 SA + \sum_{h=1}^{H} \sum_{k=1}^{K} (\beta_{n_{h}^{k}} + \xi_{h+1}^{k}) \right)
\end{equation}

Subsequently:

\begin{equation}
\sum_{k=1}^{K} \beta_{n_{h}^{k}} \leq O(1) \cdot \sum_{k=1}^{K} \sqrt{\frac{H^3 \iota}{n_{h}^{k}}} = O(1) \cdot \sum_{(\bar s,\bar a) \in (\Sc \times \Ac) / G} \sum_{n=1}^{N_{h}^{K}(s,a)} \sqrt{\frac{H^3 \iota}{n}} \leq O\left(\sqrt{H^3 \tilde{SA} K \iota}\right) = O\left(\sqrt{H^2 \tilde{SA}T \iota}\right)
\end{equation}

and the final inequality follows due to the fact that $\sum_{(\bar s,\bar a) \in (\Sc \times \Ac) / G} N_{h}^{K}(s, a) = K$ and the left hand side is maximized for $N_{h}^{K}(s, a) = K / \tilde{SA}$.
\end{proof}

\section{Kernel Methods}\label{appendix:kernel}
\paragraph{Mercer Theorem}
Let $k: \Zc \times \Zc \rightarrow \R$ be a positive definite kernel. Its associated reproducing kernel Hilbert space (RKHS), $\Hc_k$, consists of functions $f: \Zc \to \R$ with inner product $\langle \cdot, \cdot \rangle_{\Hc_k}$ and norm $|\cdot|_{\Hc_k}$, satisfying the reproducing property $f(z) = \langle f, k(\cdot, z) \rangle_{\Hc_k}$.
By Mercer’s theorem, $k$ admits an eigenfunction expansion
\begin{equation}
k(z, z') = \sum_{m=1}^{\infty} \lambda_m \phi_m(z) \phi_m(z'),
\end{equation}
where $\{\lambda_m\}_{m=1}^\infty$ are positive eigenvalues and $\{\phi_m\}_{m=1}^\infty$ are orthonormal eigenfunctions in $\Hc_k$. Any $f \in \Hc_k$ can be expressed as $f = \sum_{m=1}^\infty w_m \sqrt{\lambda_m} \phi_m$, with norm $\|f\|_{\Hc_k}^2 = \sum_{m=1}^\infty w_m^2$.

\paragraph{Kernel-Based Prediction.} For $f \in \Hc_k$ and data ${(z_i, y_i)}_{i=1}^t$ with $y_i = f(z_i) + \varepsilon_i$ and zero-mean noise, kernel ridge regression yields posterior mean and variance:
\begin{align}
\hat{f}_t(z) = k_t^\top(z) (K_t + \lambda I)^{-1} \bm{y}_t, \quad
\sigma_t^2(z) &= k(z,z) - k_t^\top(z) (K_t + \lambda I)^{-1} k_t(z),
\end{align}
where $k_t(z) = [k(z, z_1), \dots, k(z, z_t)]^\top$, $K_t = [k(z_i, z_j)]_{i,j=1}^t$, and $\lambda > 0$ is a regularization parameter. Confidence bounds of the form $|f(z) - \hat{f}_t(z)| \leq \beta_t \sigma_t(z)$ hold with high probability under standard conditions.



\section{Proof of covering number}\label{app:proof_cov}
Since we only care about the dependence of $\log \mathcal N(\epsilon, B, R, G)$ on the newly introduced variable $G$, in this section we will simplify notation.
Firstly, the definition of the state-action value function class, 
\begin{equation}\nonumber
\gQ = \{Q=\hat{Q}_T(\cdot) + \beta \sigma_T(\cdot), \forall\{z_t\}_{t=1}^T \subset \gZ\}.
\end{equation}
and the notation $\gN(\epsilon)$ for its $\epsilon$-covering number.
Let us use the notation $\gN_{k,R}(\epsilon)$ for the $\epsilon$-covering number of RKHS ball $\gB_{k,R}=\{f:\|f\|_{\Hc_k}\le R\}$, $\gN_{[0,B]}(\epsilon)$ for the $\epsilon$-covering number of interval $[0,B]$ with respect to Euclidean distance, and $\gN_{k,\bm{b}}(\epsilon)$ for the $\epsilon$-covering number of class of uncertainty functions $\bm{b}_k=\{b(z)=\left(k(z,z)-k^\top_Z(z)(K_Z+\lambda^2I)^{-1}k_Z(z)\right)^{\frac{1}{2}}, |Z|\le T\}$.

Consider $Q,\bar{Q}\in \gQ $ where $Q(z)= \min\left\{Q_0(z)+\beta b(z),H-h+1\right\}$ and $\bar{Q}(z)= \min\left\{\bar{Q}_0(z)+\bar{\beta} \bar{b}(z),H-h+1\right\}$. We have
\begin{eqnarray}
    |Q(z)-\bar{Q}(z)| \le   |Q_0(z)-\bar{Q}_0(z)| + |\beta-\bar{\beta}| + B|b(z)-\bar{b}(z)|.
\end{eqnarray}

That implies 
\begin{eqnarray}\label{eq:disint}
    \gN \le \gN_{k,R}(\frac{\epsilon}{3})\gN_{[0,B]}(\frac{\epsilon}{3})\gN_{k,\bm{b}}(\frac{\epsilon}{3B}).
\end{eqnarray}

For the $\epsilon$-covering number of the $[0,B]$ interval, we simply have $\gN_{[0,B]}(\epsilon/3)\le1+3B/\epsilon$. 
In the next lemmas, we bound the $\epsilon$-covering number of the RKHS ball and the class of uncertainty functions.

\begin{lemma}[RKHS Covering Number]\label{lem:RKHScov_num}
Consider a positive definite kernel $k:\Zc\times \Zc\rightarrow \Rr$. Under Assumption~\ref{ass:eigendecay}, the $\epsilon$-covering number of $R$-ball in the RKHS satisfies
\begin{eqnarray}
    \log \gN_{k,R}(\epsilon) =\gO\left(\left(\frac{R^2}{\epsilon^2|G|^p}\right)^{\frac{1}{p-1}} \log(1+\frac{R}{\epsilon})\right).
\end{eqnarray}
\end{lemma}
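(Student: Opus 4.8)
The plan is to combine the Mercer expansion of $k$ with the eigendecay of Assumption~\ref{ass:eigendecay}, reducing the sup-norm covering problem for the ball $\gB_{k,R}$ to a truncation step followed by discretizing a finite-dimensional coefficient ball; this is the standard argument behind the non-invariant version in \citet{yang2020provably}, and the only new ingredient is that the $|G|$-factor in the eigendecay propagates through the tail estimate. By Mercer's theorem every $f\in\gB_{k,R}$ can be written as $f=\sum_{m=1}^\infty \sqrt{\lambda_m}\,w_m\phi_m$ with $\sum_m w_m^2\le R^2$, where $\{\phi_m\}$ are the orthonormal (and, we assume, uniformly bounded) eigenfunctions. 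I would split $f=f_{\le D}+f_{>D}$ into its projection onto the first $D$ eigenfunctions and the tail, and treat the two parts separately, choosing $D$ to balance the truncation error against $\epsilon$.

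For the tail, a pointwise Cauchy--Schwarz gives $|f_{>D}(z)|\le\big(\sum_{m>D}w_m^2\big)^{1/2}\big(\sum_{m>D}\lambda_m\phi_m(z)^2\big)^{1/2}\le R\,(\sum_{m>D}\lambda_m)^{1/2}\|\phi\|_\infty$. This is exactly where $|G|$ enters: substituting $\lambda_m=\gO((m|G|)^{-p})$ and using $p>1$ yields $\sum_{m>D}\lambda_m=\gO(|G|^{-p}D^{-(p-1)})$, so $\|f_{>D}\|_\infty=\gO(R\,|G|^{-p/2}D^{-(p-1)/2})$. Forcing this below $\epsilon/2$ gives the truncation level $D=\gO\big((R^2/(\epsilon^2|G|^p))^{1/(p-1)}\big)$, which is precisely the factor appearing in the final bound.

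For the low-frequency part, Cauchy--Schwarz again gives $\|f_{\le D}-g_{\le D}\|_\infty\le\|w-v\|_2\,(\sum_{m\le D}\lambda_m)^{1/2}\|\phi\|_\infty$, and since $\sum_m\lambda_m$ is a bounded trace, covering the coefficient vectors $(w_m)_{m\le D}$ to Euclidean accuracy $\gO(\epsilon)$ inside the radius-$R$ ball of $\R^D$ covers $f_{\le D}$ to within $\epsilon/2$ in sup norm. The standard volumetric estimate produces a net of size $(1+\gO(R/\epsilon))^D$, hence $\log\gN_{k,R}(\epsilon)=\gO(D\log(1+R/\epsilon))$. Substituting the value of $D$ gives the claimed $\gO\big((R^2/(\epsilon^2|G|^p))^{1/(p-1)}\log(1+R/\epsilon)\big)$.

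The main obstacle I anticipate is controlling the tail in the sup norm rather than in an $L^2$ norm: the clean estimate above requires uniform pointwise control of $\sum_{m>D}\lambda_m\phi_m(z)^2$, which needs either uniformly bounded eigenfunctions or a careful use of the bounded-diagonal condition $k(z,z)\le\kappa$ (the latter bounds the full sum $\sum_m\lambda_m\phi_m(z)^2$ but not directly its tail). Making this step rigorous, and verifying that the constants absorbed into $\gO(\cdot)$ are genuinely independent of $\epsilon,R$ and $|G|$ as the remark after Theorem~\ref{the:cov_num} asserts, is the delicate part; the remaining truncation-and-discretization bookkeeping is routine.
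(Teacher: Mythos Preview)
Your proposal is correct and follows essentially the same route as the paper: Mercer expansion, truncation at level $D$, Cauchy--Schwarz on the tail to extract the $|G|^{-p/2}D^{-(p-1)/2}$ factor from Assumption~\ref{ass:eigendecay}, then a volumetric cover of the $D$-dimensional coefficient ball. The only difference is cosmetic---the paper bounds $\sqrt{\lambda_m}|\phi_m(z)|\le C_1(m|G|)^{-p/2}$ first and then applies Cauchy--Schwarz to $\sum|w_m|(m|G|)^{-p/2}$, whereas you apply Cauchy--Schwarz directly to $\sum w_m\sqrt{\lambda_m}\phi_m(z)$; both routes silently rely on uniformly bounded eigenfunctions, which you rightly flag as the delicate point and which the paper absorbs into the constant $C_1$ without comment.
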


\begin{lemma}[Uncertainty Class Covering Number]\label{lem:Uncer_cov_num}
Consider a positive definite kernel $k:\Zc\times \Zc\rightarrow \Rr$. Under Assumption~\ref{ass:eigendecay}, the $\epsilon$-covering number of the class of uncertainty functions satisfies 
\begin{eqnarray}
    \log \gN_{k,\bm{b}}(\epsilon) =\gO\left((\frac{1}{\epsilon^2|G|^p})^{\frac{2}{p-1}}(1+\log(\frac{1}{\epsilon}))\right)
\end{eqnarray}

\end{lemma}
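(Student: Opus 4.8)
The plan is to adapt the truncation-plus-matrix-covering argument already used for Lemma~\ref{lem:RKHScov_num}, while tracking the dependence on $|G|$ that enters solely through the eigendecay of Assumption~\ref{ass:eigendecay}. The natural starting point is the feature-space form of the posterior variance. Writing the Mercer feature map $\psi(z) = (\sqrt{\lambda_m}\,\phi_m(z))_{m\ge 1}$ so that $k(z,z')=\langle\psi(z),\psi(z')\rangle$, and collecting the features of $Z=\{z_i\}$ into the operator $\Psi_Z$ with rows $\psi(z_i)^\top$, the push-through (Woodbury) identity gives
\begin{equation}\nonumber
b(z)^2 = \sigma_Z^2(z) = \lambda^2 \, \psi(z)^\top \left(\Psi_Z^\top\Psi_Z + \lambda^2 I\right)^{-1}\psi(z).
\end{equation}
Thus every member of $\bm{b}_k$ is the square root of a quadratic form in $\psi(z)$ governed by the single regularized data operator $M_Z := \Psi_Z^\top\Psi_Z+\lambda^2 I$, so covering $\bm{b}_k$ reduces to covering the set of admissible operators $M_Z$.

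Next I would truncate the feature map to its first $D$ coordinates, $\psi_D$, and let $\tilde b$ denote the uncertainty function built from $\psi_D$ and $M_Z^{(D)} := \Psi_{Z,D}^\top\Psi_{Z,D}+\lambda^2 I$. The key estimate is that this truncation error is controlled \emph{uniformly} over $z$ and over all $Z$ with $|Z|\le T$ by the tail eigenmass $\delta_D := \sum_{m>D}\lambda_m$: a resolvent/perturbation bound on the regularized inverse yields $|\sigma_Z^2(z)-\tilde\sigma_Z^2(z)| = \gO(\delta_D)$, and since $|\sqrt{a}-\sqrt{b}|\le\sqrt{|a-b|}$ this transfers to $\|b-\tilde b\|_{l_\infty} = \gO(\sqrt{\delta_D})$. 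Assumption~\ref{ass:eigendecay} is exactly what injects the group: $\delta_D = \gO\!\big(\sum_{m>D}(m|G|)^{-p}\big) = \gO\!\big(D^{1-p}|G|^{-p}\big)$ for $p>1$, so the truncation level needed to reach accuracy $\epsilon$ scales as $D \asymp (\epsilon^2|G|^p)^{-1/(p-1)}$.

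With the truncation fixed, the remaining functions $\tilde b$ are quadratic forms determined by the $D\times D$ symmetric matrix $M_Z^{(D)}$, whose spectrum lies in $[\lambda^2,\lambda^2+T\kappa]$ for $\kappa = \sup_z k(z,z)$. I would cover this bounded set of positive-definite matrices in operator norm; since it has $\gO(D^2)$ free parameters, its $\delta$-covering number satisfies $\log = \gO(D^2\log(1/\delta))$, and a matrix perturbation bound shows that resolution $\delta = \gO(\epsilon^2)$ on $M_Z^{(D)}$ suffices to perturb $\tilde b$ by $\gO(\epsilon)$ in $l_\infty$. Combining the two error sources at level $\epsilon$ and substituting $D \asymp (\epsilon^2|G|^p)^{-1/(p-1)}$ gives
\begin{equation}\nonumber
\log \gN_{k,\bm{b}}(\epsilon) = \gO\!\left(D^2\big(1+\log\tfrac{1}{\epsilon}\big)\right) = \gO\!\left(\Big(\tfrac{1}{\epsilon^2|G|^p}\Big)^{\frac{2}{p-1}}\big(1+\log\tfrac{1}{\epsilon}\big)\right),
\end{equation}
as claimed; the quadratic appearance of $D$ (versus the linear dependence in Lemma~\ref{lem:RKHScov_num}) is precisely why the exponent is $2/(p-1)$ rather than $1/(p-1)$.

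The main obstacle I anticipate is the uniform truncation estimate: one must bound $|\sigma_Z^2(z)-\tilde\sigma_Z^2(z)|$ by $\gO(\delta_D)$ \emph{simultaneously} over all $z\in\Zc$ and all datasets $Z$ of size up to $T$, where the feature vectors are genuinely infinite-dimensional. This requires splitting $\psi = (\psi_D,\psi_{>D})$, controlling the off-diagonal blocks of $M_Z^{-1}$ via the resolvent identity, and showing the tail block contributes at most $\lambda^{-2}\delta_D$ no matter how the (arbitrarily many) data points populate the tail directions; the non-Lipschitz square root must then be absorbed through the $\sqrt{|a-b|}$ bound rather than a first-order expansion. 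The rest is the same bookkeeping as in Lemma~\ref{lem:RKHScov_num}, with all the $|G|$-dependence already isolated in $\delta_D$.
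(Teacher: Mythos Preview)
Your proposal is correct and follows the same two-step strategy as the paper: reduce to $b^2$ via $|\sqrt{a}-\sqrt{b}|\le\sqrt{|a-b|}$, truncate the Mercer expansion at level $D\asymp(\epsilon^2|G|^p)^{-1/(p-1)}$ using the tail eigenmass, and then cover the $D$-dimensional quadratic form with $\gO(D^2)$ free parameters. The paper's write-up states a diagonal representation $(b(z))^2=\sum_m\gamma_m\lambda_m\phi_m^2(z)$ with $\gamma_m\in[0,1]$ and simply delegates the $D^2$ covering to Lemma~D.3 of \citet{yang2020provably}; your parametrization via the full data operator $M_Z^{(D)}$ is more explicit about the off-diagonal structure and the uniform truncation bound, which is the honest version of the same step.

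One small refinement is needed to match the stated bound exactly: you cover $M_Z^{(D)}=\Psi_{Z,D}^\top\Psi_{Z,D}+\lambda^2 I$, whose operator norm grows like $T$, so the log-covering you wrote as $\gO(D^2\log(1/\delta))$ is really $\gO(D^2\log(T/\delta))$ and carries an extraneous $\log T$. If you instead cover the scaled inverse $\Sigma^{(D)}:=\lambda^2(M_Z^{(D)})^{-1}$, which satisfies $0\preceq\Sigma^{(D)}\preceq I$ for every dataset $Z$, the admissible set has unit diameter independent of $T$; the quadratic form $\psi_D(z)^\top\Sigma^{(D)}\psi_D(z)$ is Lipschitz in $\Sigma^{(D)}$ with constant $\sup_z\|\psi_D(z)\|^2\le\kappa$, and the bound becomes $\gO(D^2(1+\log(1/\epsilon)))$ as claimed. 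This is effectively what the paper's $\gamma_m\in[0,1]$ encodes.
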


Combining~\eqref{eq:disint} with Lemmas~\ref{lem:RKHScov_num} and~\ref{lem:Uncer_cov_num}, we obtain

    \begin{eqnarray}
    \log \gN(\epsilon) =\gO\left( (\frac{R^2}{\epsilon^2 |G|^p})^{\frac{1}{p-1}}(1+\log(\frac{R}{\epsilon})) + (\frac{B^2}{\epsilon^2|G|^p})^{\frac{2}{p-1}}(1+\log(\frac{B}{\epsilon}))\right),
\end{eqnarray}

that completes the proof of the theorem.
Next, we provide the proof of two lemmas above on the covering numbers of the RKHS ball and the uncertainty function class.

\begin{proof}[Proof of Lemma~\ref{lem:RKHScov_num}]

For $f$ in the RKHS, recall the following representation
\begin{eqnarray}
f = \sum_{m=1}^\infty w_m\sqrt{\lambda_m}\phi_m,
\end{eqnarray}
as well as its projection on the $D$-dimensional RKHS
\begin{eqnarray}
\Pi_D[f] = \sum_{m=1}^D w_m\sqrt{\lambda_m}\phi_m.
\end{eqnarray}
We note that
\begin{eqnarray}\nonumber
\|f-\Pi_D[f]\|_{\infty}&=&
\sum_{m=D+1}^{\infty} w_m\sqrt{\lambda_m}\phi_m\\\nonumber
&\le& C_1 \sum_{m=D+1}^{\infty} |w_m|(m\theta)^{-\frac{p}{2}}\\\nonumber
&\le& C_1 \left(\sum_{m=D+1}^{\infty} |w_m|^2\right)^{\frac{1}{2}}
\left(
\sum_{m=D+1}^{\infty }(m\theta)^{-p}
\right)^{\frac{1}{2}}
\\\nonumber
&\le& C_1 R\left(\int_{D}^{\infty}x^{-p}dx\right)^{\frac{1}{2}}\\\nonumber
&=&\frac{C_1 R\theta^{-\frac{p}{2}}}{\sqrt{p-1}}D^{\frac{-p+1}{2}}.
\end{eqnarray}

The second inequality follows form Cauchy–Schwarz inequality.

Now, let $D_0$ be the smallest $D$ such that the right hand side is bounded by $\frac{\epsilon}{2}$. There exists a constant $C_2>0$, only depending on constants $C_1$ and $p$, such that 
\begin{eqnarray}
    D_0\le C_2\left({\frac{R\theta^{-p/2}}{\epsilon}}\right)^{\frac{2}{p-1}}.
\end{eqnarray}

For a $D$-dimensional linear model, where the norm of the weights is bounded by $R$, the $\epsilon$-covering is at most $C_3 D(1+\log(\frac{R}{\epsilon})$, for some constant $C_3$~\citep[e.g., see,][]{yang2020provably}. Using an $\epsilon/2$ covering number for the space of $\Pi_D[f]$ and using the minimum number of dimensions that ensures $|f-\Pi_D[f]|\le\epsilon/2$, we conclude that
\begin{eqnarray}\nonumber
    \log\gN_{k,R}(\epsilon) &\le& C_3D_0(1+\log(\frac{R}{\epsilon}))\\\nonumber
    &\le& C_2C_3\left({\frac{R\theta^{-p/2}}{\epsilon}}\right)^{\frac{2}{p-1}}(1+\log(\frac{R}{\epsilon})),
\end{eqnarray}

that completes the proof of the lemma. 
    
\end{proof}

\begin{proof}[Proof of Lemma~\ref{lem:Uncer_cov_num}]

Let us define $\bm{b}_k^2=\{b^2:b\in\bm{b}_k\}$ and $\gN_{k,\bm{b}^2}({\epsilon})$ to be its $\epsilon$-covering number. 
We note that, for $b,\bar{b}\in\bm{b}$,
\begin{eqnarray}
   | b(z) - \bar{b}(z) | \le \sqrt{|(b(z))^2 - (\bar{b}(z))^2|}.    
\end{eqnarray}
Thus, an $\epsilon$-covering number of $\bm{b}$ is bounded by an $\epsilon^2$-covering of $\bm{b}^2$:
\begin{equation}
    \gN_{k,\bm{b}}({\epsilon}) \le \gN_{k,\bm{b}^2}({\epsilon^2}).
\end{equation}

We next bound $\gN_{k,\bm{b}^2}({\epsilon^2})$.

Using the feature space representation of the kernel, we obtain
\begin{equation}
    (b(z))^2 = \sum_{m=1}^\infty \gamma_m\lambda_m\phi^2_m(z),
\end{equation}
for some $\gamma_m\in[0,1]$. Based on the GP interpretation of the model, $\gamma_m$ can be understood as the posterior variances of the weights. Let $D_0$ be the smallest $D$ such that $\sum_{m=D+1}^\infty \lambda_m\phi^2_m(z)\le \epsilon^2/2$. Similar to the previous lemma, we can show that, for some constant $C_4$, only depending on constants $C_1$ and $p$, 
\begin{equation}\label{eq:d0}
    D_0\le C_4\left(\frac{1}{\theta^p\epsilon^2}\right)^{\frac{1}{p-1}}.
\end{equation}

For $\sum_{m=1}^{D_0} \gamma_m\lambda_m\phi^2_m(z)$ on a finite $D_0$-dimensional spectrum, as shown in Lemma~$D.3$ of \cite{yang2020provably}, an $\epsilon^2/2$ covering number scales with $D_0^2$. Specifically, an $\epsilon^2/2$ covering number of the space of $\sum_{m=1}^{D_0} \gamma_m\lambda_m\phi^2_m(z)$ is bounded by 
\begin{equation}\label{eq:cp}
C_5D_0^2(1+\log(\frac{1}{\epsilon})).
\end{equation}

Combining Equations~\eqref{eq:d0} and~\eqref{eq:cp}, we obtain

\begin{eqnarray}\nonumber
    \gN_{k,\bm{b}^2}({\epsilon^2}) &\le& C_5D_0^2(1+\log(\frac{1}{\epsilon}))\\\nonumber
    &\le& C_5C_4^2\left(\frac{1}{\theta^p\epsilon^2}\right)^{\frac{2}{p-1}}(1+\log(\frac{1}{\epsilon})),
\end{eqnarray}
that completes the proof of the lemma. 
\end{proof}

\begin{proof}[Proof of Corollary~\ref{cor:regret}]
From the definition of $k^*$:
$$
\kappa^* = \max \left( \xi^*, \frac{2d + p + 1}{(d + p) \cdot [p - 1]}, \frac{2}{p - 3} \right),
$$
$$
\xi^* = \frac{d + 1}{2(p + d)}.
$$
Then:
$$\gN(\epsilon(T), B_T,R_T,G) = \tilde \gO\left(T^{\frac{4p^2+p-1}{2p^2-p}}\left(\frac{1}{|G|}\right)^{\frac{2p-1}{p-1}} + T^{k^*+\frac{4p}{p-1}} \left(\frac{1}{|G|}\right)^{\frac{2p}{p-1}} \right)$$
which is dominated by the $\Gamma_{k_G}(T)$ term in its dependence on $|G|$, hence the result.
\end{proof}

\section{Experimental details}\label{app:exp}
Here, we outline the procedure of generating $r$ and $P$ test functions from the RKHS in the synthetic setting. We also detail the hyperparameters used in both the synthetic and Frozen Lake experiments, along with the computational resources required.
\subsection{Synthetic Setting}\label{app:synthetic}
The reward function $r$ and transition probability $P$ are chosen as arbitrary functions from the RKHS of the invariant kernel $k_G$. To generate $r$, we draw a Gaussian Process (GP) sample on a subset of the domain $\Zc$. This subset consists of evenly spaced points on a $5 \times 5$ grid covering the range $[-1, 1]$ in both dimensions. Kernel ridge regression is then fitted to these samples, and the predictions are scaled to the $[0,1]$ range to produce $r$ (see Figure~\ref{fig:R_RBF}). To construct $P(s'|s,a)$, we similarly draw a GP sample on a subset of the domain $\Zc\times \Sc$, fit kernel ridge regression to these samples, and then shift and rescale for each $z$ to yield a valid conditional probability distribution $P(\cdot|z)$. This is a common approach to create functions belonging to an RKHS ~\citep[e.g., see,][]{chowdhury2017kernelized}. We use RBF as the base kernel of $k_G$, with length scale $=0.1$  and $\lambda=0.01$.

For the KRVI algorithm, we use a length scale of $1$, $\lambda = e^{-10}$, and a confidence interval width multiplier $\beta = 0.1$ for both the standard RBF and the invariant kernel. Results are averaged over 20 random seeds. Kernel ridge regression is implemented using the Scikit-Learn library~\citep{pedregosa2011scikit}, which provides robust and efficient tools for kernel-based machine learning models. Simulations are run on a computing cluster with 512 GiB of RAM and two Intel(R) Xeon(R) Gold 6248 CPUs running at 2.5 GHz.
\begin{figure}[h]
    \centering
    \includegraphics[width=0.45\textwidth]{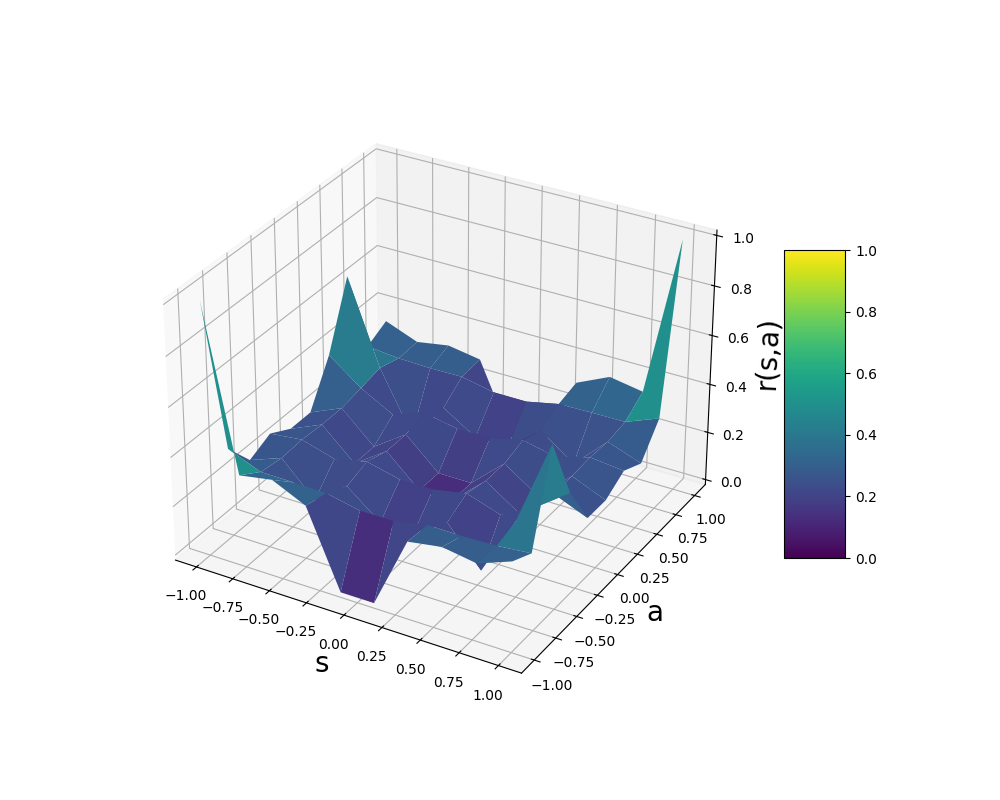}
    \caption{Reward function $r(s,a)$ generated by kernel ridge regression using an invariant kernel with lengthscale $=0.1$ and $\lambda= 0.01$}
    \label{fig:R_RBF}
\end{figure}
\subsection{Frozen Lake Experiments}\label{app:frozen_lake}
For the Fixed Layout Frozen Lake  and Random Layout Frozen Lake experiments, we use the BoTorch library~\citep{balandat2020botorch}, specifically the SingleTaskGP model. Simulations are conducted on a node with 376.2 GiB of RAM and an Intel(R) Xeon(R) Gold 5118 CPU running at 2.30 GHz. We use the code for invariant kernels made publicly available by~\citep{brown2024sample}.

We tune the confidence interval width multiplier $\beta$ for each setting (Fixed Layout and Random Layout) for both the invariant and standard RBF kernels by selecting the best-performing value from the grid $\{0.01, 0.1, 0.5, 1\}$. The kernel length scale is also tuned over the set $\{0.1, 0.5, 1\}$. The noise regularization parameter $\lambda$ is initialized to $0.1$ and optimized during training by maximizing the marginal log-likelihood, as handled by the BoTorch library.

\textcolor{black}{To assess sensitivity to $\beta$, Table~\ref{tab:ablation} reports cumulative returns for different values of $\beta$ on the Frozen Lake (Fixed) environment for both the standard RBF and invariant kernels, while keeping all other hyperparameters fixed to their selected values.}

\begin{table}[h]
\centering
\caption{Cumulative returns for different $\beta$ values on Frozen Lake (Fixed), with standard RBF and invariant kernels.}
\label{tab:ablation}

\begin{tabular}{ccccccccc}
\toprule
 & \multicolumn{4}{c}{RBF kernel} & \multicolumn{4}{c}{Invariant kernel} \\
\cmidrule(lr){2-5} \cmidrule(lr){6-9}
Episodes & $\beta = 0.01$ & $\beta = 0.1$ & $\beta = 0.5$ & $\beta = 1.0$ 
         & $\beta = 0.01$ & $\beta = 0.1$ & $\beta = 0.5$ & $\beta = 1.0$ \\
\midrule
250  & $\sim$22  & $\sim$22  & $\sim$22  & $\sim$22 
     & $\sim$42  & $\sim$43  & $\sim$11  & $\sim$8 \\
500  & $\sim$77  & $\sim$47  & $\sim$52  & $\sim$53 
     & $\sim$138 & $\sim$138 & $\sim$24  & $\sim$23 \\
1000 & $\sim$318 & $\sim$174 & $\sim$85  & $\sim$83 
     & $\sim$500 & $\sim$371 & $\sim$41  & $\sim$46 \\
1500 & $\sim$779 & $\sim$381 & $\sim$129 & $\sim$123 
     & $\sim$964 & $\sim$626 & $\sim$62  & $\sim$86 \\
\bottomrule
\end{tabular}
\end{table}

These results indicate that smaller values of $\beta$ tend to perform better in practice, as larger values can lead to excessive exploration. This trend is consistent across both RBF and invariant kernels and motivates the selection of $\beta = 0.01$ in our experiments. We observe similar qualitative behavior across settings.

For \textbf{FrozenLake (Fixed)}, the selected hyperparameters are:  
\begin{itemize}
    \item  RBF kernel: $\beta = 0.01$, length scale $= 0.1$, $\lambda$ initialized to $0.1$ (optimized during training). 
    \item Invariant kernel: $\beta = 0.01$, length scale $= 0.5$, $\lambda$ initialized to $0.1$ (optimized during training).  
\end{itemize}

For \textbf{FrozenLake (Random)}, the selected hyperparameters are:  
\begin{itemize}
    \item RBF kernel: $\beta = 0.01$, length scale $= 1$, $\lambda$ initialized to $0.1$ (optimized during training).
    \item Invariant kernel: $\beta = 0.01$, length scale $= 0.5$, $\lambda$ initialized to $0.1$ (optimized during training). 
\end{itemize}

\subsection{Additional Experiments: KOVI (with and without symmetry) vs. DQN (with and without symmetry)}\label{appendix:dqn_v_kovi}
To evaluate the performance of our kernel-based approach (KOVI with a standard RBF kernel) and its symmetry-aware variant (KOVI with an invariant kernel) against neural network–based methods, we compared them with DQN and its symmetrized counterpart (SymDQN) (see Figure~\ref{fig:DQN_KOVI}). This comparison highlights the practical advantages of our kernel-based methods and provides insight into how they perform relative to deep RL approaches that explicitly incorporate symmetry (e.g., equivariant networks). Both DQN and SymDQN were implemented with the same base network architecture, with SymDQN using an equivariant policy network within the Stable-Baselines3 API, tailored to the rotational symmetries of the FrozenLake (Fixed) environment.  

The results show that KOVI is substantially more sample-efficient than DQN, converging to the highest return with significantly fewer environment timesteps. Likewise, KOVI with an invariant kernel converges faster than SymDQN.  Moreover, in both the kernel-based and neural approaches, the symmetry-aware variants consistently outperform their non-symmetric counterparts, highlighting the benefits of incorporating symmetry. Overall, kernel-based methods (with and without symmetry) achieve higher sample efficiency than their neural counterparts. These findings demonstrate that kernel methods can be more effective than neural networks, particularly in low-data regimes. While neural networks are known to be more scalable, they generally require large amounts of data to perform well. In contrast, kernel-based approaches tend to train faster and achieve superior performance when data are limited. This makes kernel methods especially valuable in structured, data-scarce environments where prior knowledge—such as symmetry—can be effectively exploited.  

For completeness, DQN was run with light hyperparameter tuning over a grid of discrete values for the following parameters: \texttt{learning\_starts} $\in \{0, 1000\}$, \texttt{train\_freq} $\in \{1, 100\}$, \texttt{batch\_size} $\in \{128, 256\}$, \texttt{exploration\_fraction} $\in \{0.01, 0.05\}$, \texttt{target\_update\_interval} $\in \{500, 1000\}$, and \texttt{learning\_rate} $\in \{1\mathrm{e}{-4}, 5\mathrm{e}{-5}\}$. The final configuration, which performed best, was:  
 \texttt{learning\_starts = 1000}, \texttt{train\_freq = 1}, \texttt{gradient\_steps = 1}, \texttt{batch\_size = 256}, \texttt{exploration\_fraction = 0.05}, \texttt{exploration\_initial\_eps = 1}, \texttt{exploration\_final\_eps = 0.05}, \texttt{target\_update\_interval = 500}, \texttt{learning\_rate = 0.0001}, \texttt{buffer\_size = 100000}.  


\subsection{SynPl experiments}
We reuse most of the settings from the Frozen Lake experiments, but make relevant modifications to the environment to encode actions and states correctly. Since both actions and states represent positions on an 8x8 grid, we can produce a uniform representation for states and actions. Each position vector is a half integer in $[-4, 4]^2$ corresponding to the centers of squares of an 8x8 centered lattice at (0,0). Hyperparameter tuning is performed for $\beta$, the starting length scale and the starting noise regularization. We run experiments either with these values fixed or with optimizing these values in BoTorch and warm-starting the optimization in subsequent episodes. We perform hyperparameter tuning separately for the RBF baseline and for the invariant kernel, averaging performance over 5 seeds and then selecting the best configuration of hyperparameters. 
We then rerun the best configurations for 3 seeds for a total of 4000 episodes. Across both configurations, BoTorch optimization of the initial parameters provides improvement, and the optimal values found are $\beta=0.1$ for the invariant kernel, $\beta=0.05$ for the RBF kernel. For both kernels the optimal length scale was 1, and optimal noise regularization was $10^{-6}$.

\end{document}